\theoremstyle{plain}
\theoremstyle{plain}
\theoremstyle{plain}
\newtheorem{lem}{\protect\lemmaname}
\theoremstyle{plain}
\newtheorem{thm}{\protect\theoremname}
\theoremstyle{plain}
\theoremstyle{definition}
\theoremstyle{definition}
\newtheorem{assump}{\protect\assumptionname}
\theoremstyle{definition}
\providecommand{\claimname}{Claim}
\providecommand{\lemmaname}{Lemma}
\providecommand{\propositionname}{Proposition}
\providecommand{\theoremname}{Theorem}
\providecommand{\corollaryname}{Corollary} 
\providecommand{\definitionname}{Definition}
\providecommand{\assumptionname}{Assumption}
\providecommand{\remarkname}{Remark}
\DeclareMathOperator*{\argmin}{arg\,min}
\newcommand{\cunder}{\underline{c}}
\newcommand{\cbar}{\overline{c}}
\newcommand{\Tearly}{T_{\mathrm{early}}}
\newcommand{\Rearly}{R_{\mathrm{early}}}
\newcommand{\Rlate}{R_{\mathrm{late}}}
\newcommand{\Ctil}{\widetilde{C}}
\newcommand{\ctil}{\widetilde{c}}
\newcommand{\ftil}{\widetilde{f}}
\newcommand{\UCB}{\mathrm{UCB}}
\newcommand{\LCB}{\mathrm{LCB}}
\newcommand{\UCBtil}{\widetilde{\mathrm{UCB}}}
\newcommand{\LCBtil}{\widetilde{\mathrm{LCB}}}
\newcommand{\Lctil}{\widetilde{\mathcal{L}}}
\newcommand{\xv}{\mathbf{x}}
\newcommand{\yv}{\mathbf{y}}
\newcommand{\Ac}{\mathcal{A}}
\newcommand{\Bc}{\mathcal{B}}
\newcommand{\Ic}{\mathcal{I}}
\newcommand{\EE}{\mathbb{E}}
\newcommand{\PP}{\mathbb{P}}
\newcommand{\ZZ}{\mathbb{Z}}
\newcommand{\Lc}{\mathcal{L}}
\newcommand{\Iv}{\mathbf{I}}
\newcommand{\fv}{\mathbf{f}}
\newcommand{\Kv}{\mathbf{K}}
\newcommand{\kv}{\mathbf{k}}
\icmltitlerunning{Tight Regret Bounds for Bayesian Optimization in One Dimension}
\begin{document}

\twocolumn[
\icmltitle{Tight Regret Bounds for Bayesian Optimization in One Dimension}




\begin{icmlauthorlist}
\icmlauthor{Jonathan Scarlett}{to}
\end{icmlauthorlist}

\icmlaffiliation{to}{Department of Computer Science \& Department of Mathematics, National University of Singapore}

\icmlcorrespondingauthor{Jonathan Scarlett}{scarlett@comp.nus.edu.sg}

\icmlkeywords{Machine Learning, ICML}

\vskip 0.3in
]



\printAffiliationsAndNotice{}  

\begin{abstract}
    We consider the problem of Bayesian optimization (BO) in one dimension, under a Gaussian process prior and Gaussian sampling noise.  We provide a theoretical analysis showing that, under fairly mild technical assumptions on the kernel, the best possible cumulative regret up to time $T$ behaves as $\Omega(\sqrt{T})$ and $O(\sqrt{T\log T})$. This gives a tight characterization up to a $\sqrt{\log T}$ factor, and includes the first non-trivial lower bound for noisy BO.  Our assumptions are satisfied, for example, by the squared exponential and Mat\'ern-$\nu$ kernels, with the latter requiring $\nu > 2$.  Our results certify the near-optimality of existing bounds (Srinivas {\em et al.}, 2009) for the SE kernel, while proving them to be strictly suboptimal for the Mat\'ern kernel with $\nu > 2$.  
\end{abstract}

\section{Introduction}

Bayesian optimization (BO) \cite{Sha16} is a powerful and versatile tool for black-box function optimization, with applications including parameter tuning, robotics, molecular design, sensor networks, and more.  The idea is to model the unknown function as a Gaussian process with a given {\em kernel function} dictating the smoothness properties.  This model is updated using (typically noisy) samples, which are selected to steer towards the function maximum.

One of the most attractive properties of BO is its efficiency in terms of the number of function samples used.  Consequently, algorithms with {\em rigorous guarantees} on the trade-off between samples and optimization performance are particularly valuable.  Perhaps the most prominent work in the literature giving such guarantees is that of \cite{Sri09}, who consider the {\em cumulative regret}:
\begin{equation}
R_T = \sum_{t=1}^T \Big( \max_{x} f(x) - f(x_t) \Big), \label{eq:cumul}
\end{equation}
where $f$ is the function being optimized, and $x_t$ is the point chosen at time $t$.  Under a Gaussian process (GP) prior and Gaussian noise, it is shown in \cite{Sri09} that an algorithm called Gaussian Process Upper Confidence Bound (GP-UCB) achieves a cumulative regret of the form
\begin{equation}
R_T = O^*( \sqrt{T\gamma_T}), \label{eq:Srinivas}
\end{equation}  
where $\gamma_T = \max_{x_1,\dotsc,x_T} I( \fv; \yv )$ (with function values $\fv = (f(x_1),\dotsc,f(x_T))$ and noisy samples $\yv = (y_1,\dotsc,y_T)$) is known as the {\em maximum information gain}.  Here $I( \fv; \yv )$ denotes the mutual information \cite{Cov01} between the function values and noisy samples, and $O^*(\cdot)$ denotes asymptotic notation up to logarithmic factors.

The guarantee \eqref{eq:Srinivas} ensures sub-linear cumulative regret for many kernels of interest.  However, the literature is severely lacking in {\em algorithm-independent lower bounds}, and without these, it is impossible to know to what extent the upper bounds, including  \eqref{eq:Srinivas}, can be improved.  In this work, we address this gap in detail in the special case of a one-dimensional function.  We show that the best possible cumulative regret behaves as $\Theta^*(\sqrt{T})$ under mild assumptions on the kernel, thus identifying both cases where \eqref{eq:Srinivas} is near-optimal, and cases where it is strictly suboptimal.

\subsection{Related Work} \label{sec:previous_work}

An extensive range of BO algorithms have been proposed in the literature, typically involving the maximization of an acquisition function \cite{Hen12,Her14,Rus14a,Wan16}; see \cite{Sha16} for a recent overview.  
As mentioned above, the most relevant algorithm to this work for the noisy setting is GP-UCB \cite{Sri09}, which constructs {\em confidence bounds} in which the function lies with high probability, and samples the point with the highest upper confidence bound.  Several extensions to GP-UCB have also been proposed, including contextual \cite{Kra11,Bog16}, batch \cite{Con13,Des14a}, and high-dimensional \cite{Kan15,Rol18} variants.

In the noiseless setting, it has been shown that it is possible to achieve {\em bounded} cumulative regret \cite{Fre12,Kaw15} under some technical assumptions.  In \cite{Fre12}, this is done by keeping track of a set of potential maximizers, and sampling increasingly finely in order to shrink that set and ``zoom in'' towards the optimal point.  Similar ideas have also been used in the noisy setting for studying batch variants of GP-UCB \cite{Con13}, simultaneous online optimization (SOO) methods \cite{Wan14f}, and lookahead algorithms that use confidence bounds \cite{Bog16a}.  Returning to the noiseless setting, upper and lower bounds were given in \cite{Gru10} for kernels satisfying certain smoothness assumptions, with the lower bounds showing that bounded cumulative regret is not always to be expected.

Alongside the Bayesian view of the Gaussian process model, several works have also considered a {\em non-Bayesian} counterpart assuming that the function has a bounded norm in the associated reproducing kernel Hilbert space (RKHS).  Interestingly, GP-UCB still provides similar guarantees to \eqref{eq:Srinivas} in this setting \cite{Sri09}.  Moreover, lower bounds have been proved; see \cite{Bul11} for the noiseless setting, and \cite{Sca17a} for the noisy setting.  In the latter, the lower bounds nearly match the GP-UCB upper bound for the squared exponential (SE) kernel, but gaps remain for the Mat\'ern kernel.  For reference, we note that these kernels are defined as follows:
\begin{align}
k_{\text{SE}}(x,x') &= \exp \bigg(- \dfrac{\|x - x'\|^2}{2l^2} \bigg) \\ 
k_{\text{Mat\'ern}}(x,x') &= \dfrac{2^{1-\nu}}{\Gamma(\nu)} \bigg(\dfrac{\sqrt{2\nu}\|x - x'\|}{l}\bigg)^{\nu} \nonumber \\
    &\qquad\times  B_{\nu}\bigg(\dfrac{\sqrt{2 \nu}\|x - x'\|}{l} \bigg),
\end{align}
where $l>0$ is a lengthscale parameter, $\nu > 0$ is a smoothness parameter, $B_{\nu}$ is the modified Bessel function, and $\Gamma$ is the gamma function.

The multi-armed bandit (MAB) \cite{Bub12} literature has developed alongside the BO literature, with the two often bearing similar concepts.  The MAB literature is far too extensive to cover here, but it is worth mentioning that sharp lower bounds are known in numerous settings \cite{Bub12}, and the above-mentioned concept of ``zooming in'' to the optimal point has also been explored \cite{Kle08}.  To our knowledge, however, none of the existing MAB results are closely related to our own.

\subsection{Our Results and Their Implications} \label{sec:contributions}

The main results of this paper are informally summarized as follows.

{\bf Main Results (Informal).} {\em Under mild technical assumptions on the kernel, satisfied (for example) by the SE kernel and Mat\'ern-$\nu$ kernel with $\nu > 2$, the best possible cumulative regret of noisy BO in one dimension behaves as $\Omega(\sqrt{T})$ and $O(\sqrt{T \log T} )$. }

Our results have several important implications:
\begin{itemize}
    \item To our knowledge, our lower bound is the first of any kind in the noisy Bayesian setting, and is tight up to a $\sqrt{\log T}$ factor under our technical assumptions.
    \item Our lower bound also establishes the order-optimality of the $O^*(\sqrt{T})$ upper bound of \cite{Sri09} applied to the SE kernel, up to logarithmic factors.
    \item On the other hand, our upper bound establishes that the upper bound of \cite{Sri09} for the Mat\'ern-$\nu$ kernel, namely $O^*( T^{\frac{\nu+2}{2\nu+2}})$, is strictly suboptimal for $\nu > 2$.  For example, if $\nu = 3$, then this is $O^*(T^{0.625})$, as opposed to our upper bound of $O^*(T^{0.5})$.  (See also \cite{She17} for recent improvements over \cite{Sri09} under the Mat\'ern kernel in higher dimensions and/or with smaller $\nu$).
    \item Another important implication for the Mat\'ern kernel with $\nu > 2$ is that the Bayesian setting is provably less difficult than the non-Bayesian RKHS counterpart; the latter has cumulative regret $\Omega( T^{\frac{\nu+1}{2\nu+1}} )$ \cite{Sca17a}, which is strictly worse than $O(\sqrt{T \log T})$.
\end{itemize}

Our upper bound is stated formally in Section \ref{sec:ub}, and its technical assumptions are given in Section \ref{sec:bo_setup}.  We build on the ideas of \cite{Fre12} for the noiseless setting, while addressing highly non-trivial challenges arising in the presence of noise.  

Our lower bound is stated formally in Section \ref{sec:lb}, and its technical assumptions are given in Section \ref{sec:bo_setup}.  The analysis is based on a reduction to binary hypothesis testing and an application of Fano's inequality \cite{Cov01}.  This approach is inspired by previous work on lower bounds for stochastic convex optimization \cite{Rag11}, but the details are very different.


\section{Problem Setup} 

\subsection{Bayesian Optimization} \label{sec:bo_setup}

We seek to sequentially optimize an unknown reward function $f(x)$ over the one-dimensional domain $D = [0,1]$; note that any interval can be transformed to this choice via re-scaling.  At time $t$, we query a single point $x_t \in D$ and observe a noisy sample $y_t = f(x_t) + z_t$, where $z_t \sim N(0,\sigma^2)$ for some noise variance $\sigma^2 > 0$, with independence across different times.  We measure the performance using the cumulative regret $R_T$, defined in \eqref{eq:cumul}.

We henceforth assume $f$ to be distributed according to Gaussian process (GP) \cite{Ras06} having mean zero and kernel function $k(x,x')$. The posterior distribution of $f$ given the points $\xv_t =[x_1,\dotsc,x_t]^T$ and observations $\yv_t =[y_1,\dotsc,y_t]^T$ up to time $t$ is again a GP, with the posterior mean and variance given by \cite{Ras06}
\begin{align}
\mu_{t}(x) &= \kv_t(x)^T\big(\Kv_t + \sigma^2 \Iv_t \big)^{-1} \yv_t \label{eq:mu_update} \\
\sigma_{t}(x)^2 &= k(x,x) - \kv_t(x)^T \big(\Kv_t + \sigma^2 \Iv_t \big)^{-1} \kv_t(x), \label{eq:sigma_update}  
\end{align}
where $\kv_t(x) = \big[k(x_i,x)\big]_{i=1}^t$, $\Kv_t = \big[k(x_i,x_j)\big]_{i,j}$, and $\Iv_t$ is the $t \times t$ identity matrix. 

\subsection{Technical Assumptions}

Here we introduce several assumptions that will be adopted in our main results, some of which were also used in the noiseless setting \cite{Fre12}.

\begin{assump} \label{as:kernel_basic}
    We have the following:
    \begin{enumerate}
        \item The kernel $k$ is stationary, depending on its inputs $(x,x')$ only through $\tau = x-x'$;
        \item The kernel $k$ satisfies $k(x,x') \le 1$ for all $(x,x')$, and $k(x,x) = 1$ for all $x\in D$;
    \end{enumerate}
\end{assump}

Given the stationarity assumption, the assumptions $k(x,x') \le 1$ and $k(x,x) = 1$ are without loss of generality, as one can always re-scale the function and adjust the noise variance $\sigma^2$ accordingly.  

Next, we give some high-probability assumptions on the random function $f$ itself.

\begin{assump} \label{as:kernel_diff}
    There exists a constant $\delta_1 \in (0,1)$ such that, with probability at least $1 - \delta_1$, we have the following:
    \begin{enumerate}
        \item The function $f$ has a unique maximizer $x^*$ such that
        \begin{equation}
        f(x^*) \ge f(x') + \epsilon \label{eq:eta}
        \end{equation}
        for any {\em local maximum} $x'$ that differs from $x^*$, for some constant $\epsilon > 0$.
        \item The function $f$ is twice differentiable;
        \item The function $f$ and its first two derivatives are bounded:
        \begin{equation}
        |f(x)| \le c_0, \quad |f'(x)| \le c_1, \quad |f''(x)| \le c_2  \label{eq:c0}
        \end{equation}
        for all $x \in D$ and some constants $(c_0,c_1,c_2)$.  This implies that $f$ is $c_1$-Lipschitz continuous, and $f'$ is $c_2$-Lipschitz continuous.
    \end{enumerate}
\end{assump}

The assumption of a unique maximizer holds with probability one in most non-trivial cases \cite{Fre12}, and \eqref{eq:eta} simply formally defines the gap to the second-highest peak.  Moreover, given twice differentiability, the remaining conditions in \eqref{eq:c0} are very mild, only requiring that the function value and its derivatives are bounded, and formally defining the corresponding constants.


Next, we provide assumptions regarding the derivatives of $f$ and the resulting Taylor expansions (typically around the optimizer $x^*$).  We adopt slightly different assumptions for the upper and lower bounds, starting with the former.

\begin{assump} \label{as:taylor}
    There exist constants $\delta_2 \in (0,1)$ and $\rho_0 \in \big(0,\frac{1}{2}\big)$ such that conditioned on the events in Assumption \ref{as:kernel_diff}, we have with probability at least $1 - \delta_2$ that {\em one of the following} is true:
    
    \begin{enumerate}
        \item The maximizer is at an endpoint (i.e., $x^* = 0$ or $x^* = 1$), and satisfies the following {\em locally linear} behavior: For all $\xi \in [0,\rho_0]$ (if $x^* = 0$) or $\xi \in [-\rho_0,0]$ (if $x^* = 1$), it holds that
        \begin{equation}
        f(x^*) -  \cunder_1 |\xi| \ge f(x^* + \xi) \ge f(x^*) - \cbar_1 |\xi| \label{eq:linear_opt}
        \end{equation}
        for some constants $\cbar_1 \ge \cunder_1 > 0$.
        \item The maximizer satisfies $x^* \in (\rho_0,1-\rho_0)$, and $f$ satisfies the following {\em locally quadratic} behavior: For all $\xi \in [-\rho_0,\rho_0]$, we have
        \begin{equation}
        f(x^*) - \cunder_2\xi^2 \ge f(x^*+\xi) \ge f(x^*) - \cbar_2\xi^2 \label{eq:Taylor_opt}
        \end{equation} 
        for some constants $\cbar_2 \ge \cunder_2 > 0$.
    \end{enumerate}
\end{assump}

This assumption is near-identical to the main assumption adopted in the noiseless setting \cite{Fre12}, and is also mild given the assumption of twice differentiability.  Indeed, \eqref{eq:linear_opt} and \eqref{eq:Taylor_opt} amount to standard Taylor expansions, with the assumptions $\cunder_1 > 0$ and $\cunder_2> 0$ only requiring non-vanishing gradient at the endpoint (first case) or non-vanishing second derivative at the function maximizer (second case).  These conditions typically hold with probability one \cite{Fre12}.  


The following assumption will be used for the lower bound.

\begin{assump} \label{as:endpoints}
    There exists constants $\delta'_2 \in (0,1)$ and $\rho_0 \in \big(0,\frac{1}{2}\big)$ such that conditioned on the events in Assumption \ref{as:kernel_diff},  {\em both of the following} hold with probability at least $1 - \delta'_2$:
    
    \begin{enumerate}
        \item For any $x \in D$ and $\xi \in [-\rho_0,\rho_0]$ for which $x + \xi \in D$, we have
        \begin{equation}
        \xi \cdot f'(x) + \cunder'_2\xi^2 \le f(x+\xi) - f(x) \le \xi \cdot f'(x) + \cbar'_2\xi^2. \label{eq:Taylor_general}
        \end{equation}
        for some (possibly negative) constants $\cbar'_2$, $\cunder'_2$.
        \item The maximizer satisfies $x^* \in (\rho_0,1-\rho_0)$, and $f$ satisfies the following for all $\xi \in [-\rho_0,\rho_0]$:
        \begin{align}
        f(x^*) - \cunder_2\xi^2 \ge f(x^*+\xi) \ge f(x^*) - \cbar_2\xi^2 \label{eq:Taylor_opt2}
        \end{align} 
        for some constants $\cbar_2 \ge \cunder_2 > 0$.
    \end{enumerate}
\end{assump}

\begin{figure}
    \centering
    \includegraphics[width=0.47\textwidth]{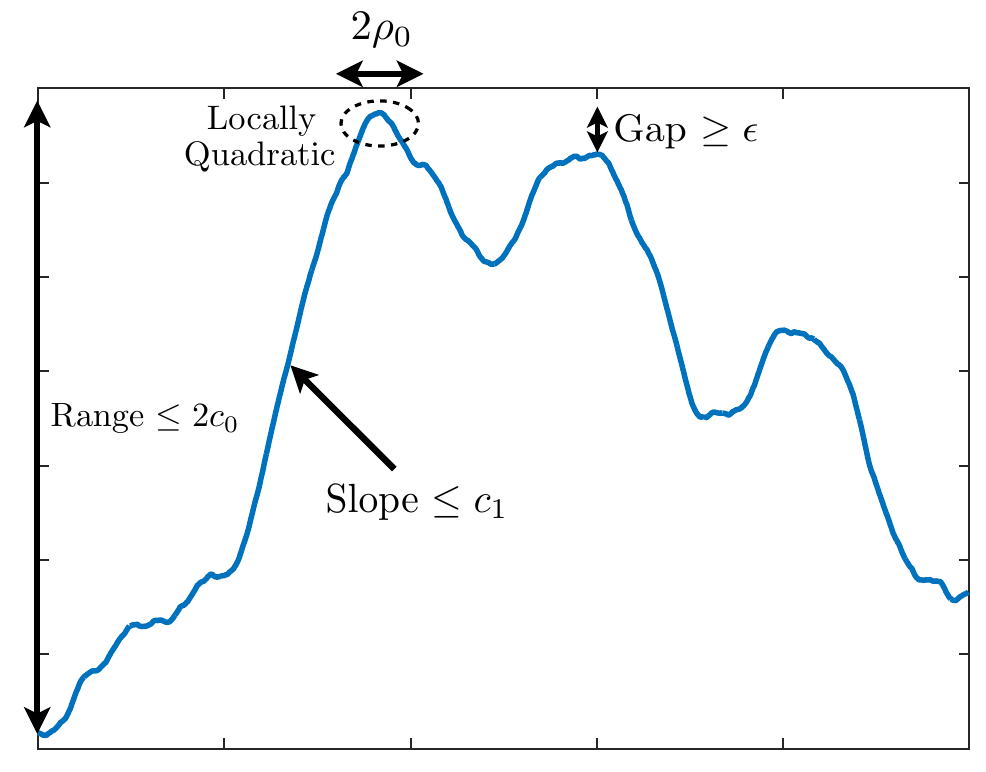}
    \par
    \caption{Illustration of some of the main assumptions: The function is bounded within $[-c_0,c_0]$ and its derivative within $[-c_1,c_1]$, the gap to the second highest peak is at least $\epsilon$, and the function is locally quadratic for points within a distance $\rho_0$ of the maximizer. \label{fig:assumptions}}
    \vspace*{-3ex}
\end{figure}

The first part is similar to \eqref{eq:Taylor_opt}, but performs a Taylor expansion around an arbitrary point rather than the specific point $x^*$, and the second part is precisely \eqref{eq:Taylor_opt}.  Note, however, that here we are assuming {\em both} of two conditions to hold, rather than one of two.  Hence, we are implicitly assuming that the first item of Assumption \ref{as:taylor} does {\em not} have a significant probability of occurring.  For stationary kernels, the only situations where an endpoint has a high probability of being optimal are those where $f$ varies very slowly (e.g., the SE kernel with a larger lengthscale than the domain width).  Such functions are of limited practical interest.

Similarly to the noiseless setting \cite{Fre12}, all of the above assumptions hold for the SE kernel, as well as the Mat\'ern-$\nu$ kernel with $\nu > 2$, with the added caveat that $\delta'_2$ in Assumption \ref{as:endpoints} is a function of the lengthscale and cannot be chosen arbitrarily.  Specifically, a smaller lengthscale implies a smaller value of $\delta'_2$.  In contrast, $\delta_1$ and $\delta_2$ in Assumptions \ref{as:kernel_diff} and \ref{as:taylor} can be made arbitrary small by suitably changing the constants $\epsilon$, $c_0$, $c_1$, $c_2$, $\rho_0$, and so on.

An illustration of some of the main assumptions and their associated constants is given in Figure \ref{fig:assumptions}.

\section{Upper Bound} \label{sec:ub}

Our upper bound is formally stated as follows.

\begin{thm} \label{thm:ub}
    {\em (Upper Bound)}
    Consider the problem of BO in one dimension described in Section \ref{sec:bo_setup}, with time horizon $T$ and noise variance $\sigma^2$ satisfying $\sigma^2 \ge \frac{c_{\sigma}}{T^{1-\zeta}}$ for some $c_{\sigma} > 0$ and $\zeta > 0$.  Under Assumptions \ref{as:kernel_basic}, \ref{as:kernel_diff}, and \ref{as:taylor}, there exists an algorithm satisfying the following: With probability at least $1 - \delta_1 - \delta_2$ (with respect to the Gaussian process $f$), the average cumulative regret (averaged over the noisy samples) satisfies
    \begin{equation}
    \EE[R_T] \le C \Big( 1 + \sigma \sqrt{T \log T}\Big). \label{eq:ub}
    \end{equation}
    Here $\delta_1$ and $\delta_2$ are defined in Assumptions \ref{as:kernel_diff} and \ref{as:taylor}, and $C$ depends only on the constants therein and $(c_{\sigma},\zeta)$.
\end{thm}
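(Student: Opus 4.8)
The plan is to design and analyse a phased ``zoom-in'' algorithm in the spirit of the noiseless procedure of \cite{Fre12}, the essential new ingredient being that each candidate location is queried many times so that the Gaussian noise can be averaged out. At the start of phase $k$ we maintain a set $M_k \subseteq [0,1]$ of \emph{potential maximizers} --- a union of finitely many intervals of total length $\ell_k$, with $M_1 = [0,1]$ --- together with the invariant $x^* \in M_k$. Within the phase we place a uniform grid of spacing $h_k$ on $M_k$, query each grid point $n_k$ fresh times, form the empirical averages $\hat f(\cdot)$, and let $M_{k+1}$ be the union of the grid cells whose centers $x$ satisfy $\hat f(x) + \beta_k + c_1 h_k \ge \max_{x'}\big(\hat f(x') - \beta_k\big)$, where $\beta_k \asymp \sigma\sqrt{\log(T/\delta)/n_k}$ is a Gaussian-tail confidence radius and the $c_1 h_k$ term accounts, via $c_1$-Lipschitzness, for the variation of $f$ across a cell. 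The first step is to show that, with probability at least $1-\delta$ over the noise, $|\hat f(x) - f(x)| \le \beta_k$ holds simultaneously for every queried point in every phase; since fresh noise is used each phase, the union bound is only over the $O(\log T)$ phases times the $O(1)$ grid points per phase, and this --- together with the choice $\delta \asymp 1/T$, whose failure costs only $O(1)$ under the trivial bound $\EE[R_T] \le 2c_0 T$ --- is where the $\sqrt{\log T}$ factor originates.

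Next I would establish, conditioning throughout on the good confidence event and on the probability-$(1-\delta_1-\delta_2)$ event of Assumptions \ref{as:kernel_diff}--\ref{as:taylor}, the two structural facts driving the analysis. First, the invariant $x^* \in M_k$ is preserved: the cell containing $x^*$ has a center $x$ with $|x-x^*|\le h_k$, hence (once $h_k \le \rho_0$) $f(x) \ge f(x^*) - \cbar_2 h_k^2$ by the upper Taylor bound in \eqref{eq:Taylor_opt} (or $f(x) \ge f(x^*) - \cbar_1 h_k$ near an endpoint via \eqref{eq:linear_opt}), so its upper bound exceeds every lower bound and the cell survives. Second, $\ell_k$ contracts geometrically once $\ell_k \le \rho_0$: by the \emph{lower} Taylor bounds, any surviving grid point lies within $O\big(\sqrt{(\beta_k + c_1 h_k)/\cunder_2}\big)$ of $x^*$ in the interior case, or within $O\big((\beta_k + c_1 h_k)/\cunder_1\big)$ near an endpoint, so if the \emph{number} of grid points per phase is a sufficiently large constant (depending only on $\cbar_2/\cunder_2$ and $\cbar_1/\cunder_1$) the $c_1 h_k$ contribution is harmless, and if $n_k$ is chosen so that $\beta_k \lesssim \cunder_2\ell_k^2$ (resp.\ $\cunder_1\ell_k$) then $\ell_{k+1}\le \ell_k/2$. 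This forces the schedule $n_k \asymp \sigma^2\log(T/\delta)/\ell_k^4$ in the generic (interior) case. A preliminary block of $O(\log(1/\rho_0)) = O(1)$ phases, argued as in the noiseless setting using boundedness, Lipschitzness, and the gap $\epsilon$ of Assumption \ref{as:kernel_diff} to discard every spurious local maximum, first drives $M_k$ into a single interval of width $\le \rho_0$ around $x^*$, at a cost of $O(\sigma^2\log T)$ samples.

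It then remains to sum the regret over phases. Once $\ell_k \le \rho_0$, every queried point lies within $O(\ell_k)$ of $x^*$, so the per-step regret is $O(\ell_k^2)$ and the phase, using $O(\sigma^2\log(T/\delta)/\ell_k^4)$ samples, contributes $O\big(\sigma^2\log(T/\delta)/\ell_k^2\big)$ to $R_T$; since $\ell_k$ halves each phase, both $\sum_k n_k$ and $\sum_k(\text{phase regret})$ are geometric and dominated by their last term. Writing $\ell_K$ for the length when the budget $T$ is exhausted, $T \asymp \sigma^2\log(T/\delta)/\ell_K^4$ gives $\ell_K^2 \asymp \sigma\sqrt{\log(T/\delta)/T}$, whence $R_T = O\big(\sigma^2\log(T/\delta)/\ell_K^2\big) = O\big(\sigma\sqrt{T\log T}\big)$; the endpoint case is identical with $\ell_k$ in place of $\ell_k^2$. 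Any samples left after the last complete phase are spent arbitrarily inside $M_K$, adding $O(\ell_K^2 T) = O(\sigma\sqrt{T\log T})$; adding the $O(\sigma^2\log T)$ burn-in cost and the $O(1)$ confidence-failure term and taking the expectation over the noise yields \eqref{eq:ub} --- after first disposing of the regime $\sigma \gtrsim \sqrt{T/\log T}$, where $C(1+\sigma\sqrt{T\log T}) \ge 2c_0 T \ge \EE[R_T]$ trivially and in whose complement $\sigma^2\log T \le \sigma\sqrt{T\log T}$. The hypothesis $\sigma^2 \ge c_\sigma T^{-(1-\zeta)}$ enters only to keep the number of phases $K = O(\log T)$ and the schedule $\{n_k\}$ legitimate (e.g.\ $n_k \ge 1$), so that the union bounds and geometric sums behave as stated.

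The step I expect to be the crux is the second one: obtaining \emph{geometric} contraction of $\ell_k$ while keeping the per-phase sampling at $\asymp \sigma^2\log(T/\delta)/\ell_k^4$ --- rather than, say, $\asymp \ell_k^{-5}$, which would already degrade the exponent of $T$ in the regret from $\tfrac12$ to $\tfrac35$. This requires using the \emph{upper} Taylor bound (to certify survival of the cell containing $x^*$, hence the invariant) and the \emph{lower} Taylor bound (to certify elimination of distant cells, hence contraction) in tandem, with only a constant number of grid points per phase whose count and placement depend solely on $\cbar_i/\cunder_i$; the fact that the algorithm knows neither $x^*$ nor whether the endpoint or interior case holds, yet must respect all of Assumptions \ref{as:kernel_basic}--\ref{as:taylor} at once, accounts for most of the remaining bookkeeping. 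A secondary difficulty is managing the transition out of the coarse burn-in phases --- where the quadratic structure is unavailable, so one relies instead on the $\epsilon$-gap to eliminate all non-global local maxima --- and verifying that it costs only $O(\sigma^2\log T)$ regret.
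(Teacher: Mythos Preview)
Your overall architecture—phased zoom-in with repeated sampling, maintaining a shrinking set of potential maximizers, splitting into burn-in and fine phases—is exactly the paper's, and your accounting of the regret (geometric sum dominated by the last phase, with $\ell_K^2 \asymp \sigma\sqrt{(\log T)/T}$) is correct. But the step you rightly flag as the crux has a genuine gap in the interior (locally quadratic) case.

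The problem is the $c_1 h_k$ term in your elimination rule. With a constant number $N$ of grid points per phase you have $h_k \asymp \ell_k/N$, so $c_1 h_k \asymp \ell_k$. Your own bound for the surviving radius in the interior case is $O\big(\sqrt{(\beta_k + c_1 h_k)/\cunder_2}\big)$; with $\beta_k \asymp \ell_k^2$ and $c_1 h_k \asymp \ell_k$, the $c_1 h_k$ term dominates and the radius is $O(\sqrt{\ell_k})$, which for small $\ell_k$ is $\gg \ell_k/2$---so there is no geometric contraction. (In the endpoint case this issue does not arise: there the surviving radius is $O((\beta_k + c_1 h_k)/\cunder_1)$, both terms are $\asymp \ell_k$, and a large constant $N$ does suffice.) Making $c_1 h_k$ comparable to $\ell_k^2$ in the interior case would force $N \gtrsim c_1/\ell_k$ grid points, producing exactly the $\ell_k^{-5}$ sample count you warned against; no constant $N$ depending only on $\cbar_2/\cunder_2$ rescues this, because the obstruction is that $c_1 h_k$ scales like $\ell_k$ rather than $\ell_k^2$.

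The missing idea is that once $M_k$ lies in the locally quadratic region, the Lipschitz constant of $f$ on $M_k$ itself shrinks: $f'(x^*)=0$ together with $|f''|\le c_2$ from Assumption~\ref{as:kernel_diff} gives $|f'(x)| \le c_2 \ell_k$ for all $x\in M_k$, so you may replace $c_1$ by the \emph{local} Lipschitz constant $L_k = c_2\ell_k$. The margin then becomes $L_k h_k \asymp \ell_k^2/N$, which is $\lesssim \beta_k$ for constant $N$, and the contraction argument goes through with $N$ depending only on $c_2/\cunder_2$. This is precisely the paper's mechanism: it sets $L_{(i)}=c_2 w_{(i)}$ in the late epochs and observes that the resulting number of grid points $2\lceil 2w_{(i)}L_{(i)}/\eta_{(i)}\rceil$ is $O(1)$ because $w_{(i)}^2 \asymp \eta_{(i)}$. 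With that one change, your sketch matches the paper's proof.
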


The assumption that $\sigma^2 \ge \frac{c_{\sigma}}{T^{1-\zeta}}$ for some $(c_{\sigma},\zeta)$ is very mild, since typically $\sigma^2$ is constant with respect to $T$.  The proof of Theorem \ref{thm:ub} extends immediately to a high probability guarantee with respect to both $f$ and the noisy samples (i.e., holding with probability $1 - \delta_1 - \delta_2 - \delta$ for $\delta$ in Lemma \ref{lem:conf_bounds} below).  We have stated the above form for consistency with the lower bound, which will be given in Section \ref{sec:lb}.

\subsection{High-Level Description of the Algorithm}

The algorithm considered in the proof of Theorem \ref{thm:ub} is described informally in Algorithm \ref{alg:resample}; the details will be established throughout the proof of Theorem \ref{thm:ub}, and a complete description is given in Appendix \ref{sec:pf_ub}.

\begin{algorithm} 
    \caption{Informal description of our algorithm, based on reducing uncertainty in epochs via repeated sampling.} \label{alg:resample}
    \begin{algorithmic}[1]
        \REQUIRE Domain $D$, GP prior ($\mu_0$, $k_0$), discrete sub-domain $\Lc \subseteq D$, time horizon $T$.
        \STATE Initialize $t=1$, epoch number $i=1$, potential maximizers $M_{(0)}=\Lc$, and target confidence $\eta_{(0)}$.
        \WHILE{less than $T$ samples have been taken}
        \STATE Set $\eta_{(i)} = \frac{1}{2} \eta_{(i-1)}$.
        \STATE Sample each point within a subset $\Lc_{(i)} \subseteq \Lc$ repeatedly $K_{(i)}$ times, where $\Lc_{(i)}$ and $K_{(i)}$ are chosen such that after this sampling, all points $x \in M_{(i-1)}$ satisfy upper and lower confidence bounds of the form
        $$ \LCB_t(x) \le f(x) \le \UCB_t(x), $$ 
        with the gap between the two bounded by $|\UCB_t(x) - \LCB_t(x)| \le 2\eta_{(i)}$.  
        \smallskip 
        \STATE Update the set of potential maximizers:
        \begin{multline*}
        M_{(i)} = \Big\{ x \in M_{(i-1)} \,:\, \\ \UCB_t(x) \ge \max_{x' \in \in M_{(i-1)}} \LCB_t(x') \Big\}.
        \end{multline*}
        \STATE Increment $i$.
        \ENDWHILE
    \end{algorithmic}
\end{algorithm}

 As in the noiseless setting \cite{Fre12}, the idea is to operate in {\em epochs} and sample a set of increasingly closely-packed points $\Lc_{(i)}$ to reduce the posterior variance, but only within a set of {\em potential maximizers} that are updated according to the confidence bounds.  As a simple means of bringing the effective noise level down, we perform {\em resampling}, i.e., sampling the same point $K_{(i)}$ times consecutively.  In each epoch, we sample enough to be able to produce upper and lower confidence bounds $\UCB_t(x)$ and $\LCB_t(x)$  that differ by at most a target value $2\eta_{(i)}$ within $M_{(i-1)}$, and then the target is halved for the next epoch.

We do not expect our algorithm to perform well in practice by any means, but it still suffices for our purposes in establishing $O(\sqrt{T \log T})$ regret.  Indeed, we have made no attempt to optimize the corresponding constant factors, and doing so would require more sophisticated techniques.  Moreover, the quantities $\Lc_{(i)}$, $K_{(i)}$, $\UCB_t$, and $\LCB_t$ in Algorithm \ref{alg:resample} are chosen as functions of both the kernel and the constants appearing in our assumptions, which limits the algorithm's practical utility even further.  Note, however, that these constants are merely a function of the kernel, and that suitable bounds suffice in place of exact values (e.g., lower bound on $\rho_0$, upper bound on $c_0$, etc.).

While our algorithm assumes a known time horizon $T$ (which is used when selecting $K_{(i)}$; see Appendix \ref{sec:pf_ub}), this assumption can easily be dropped via a standard doubling trick.  The details are given in Appendix \ref{sec:doubling}.

\subsection{Auxiliary Lemmas}

%
%
%

Here we present two very standard auxiliary lemmas.  We begin with a simpler version of the conditions of Srinivas {\em et al.} \cite{Sri09} guaranteeing that the posterior mean and variance provide valid confidence bounds with high probability.\footnote{{\bf Correction:} Lemma \ref{lem:conf_bounds} is no longer used in a corrected version of the proof of Theorem \ref{thm:ub}; see Section \ref{sec:errata} for details.}  The reason for being slightly simpler is that we are considering a fixed time horizon.

\begin{lem} \label{lem:conf_bounds}
    Fix $\delta \in (0,1)$.  For any finite set of points $\Lc \subseteq D$ and time horizon $T$, under the choice $\beta_T = 2\log\frac{|\Lc| \cdot T}{\delta}$, it holds that
    \begin{equation}
    |f(x) - \mu_{t}(x)| \le \beta_T^{1/2} \sigma_t(x), \quad \forall x\in\Lc,~ t=1,\dotsc,T, \label{eq:conf_bounds}
    \end{equation}
    with probability at least $1 - \delta$.
\end{lem}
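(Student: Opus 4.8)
The plan is to invoke the standard Gaussian concentration argument underlying the GP-UCB confidence bounds, specialized to the fixed-horizon setting. The starting point is that, by the GP posterior formulas \eqref{eq:mu_update}--\eqref{eq:sigma_update}, conditioned on the sampled locations $\xv_t = [x_1,\dotsc,x_t]^T$ and observations $\yv_t = [y_1,\dotsc,y_t]^T$, the random variable $f(x)$ is Gaussian with mean $\mu_{t}(x)$ and variance $\sigma_t(x)^2$. Hence, conditionally, $\frac{f(x) - \mu_{t}(x)}{\sigma_t(x)}$ is a standard normal, and the elementary Gaussian tail bound $\PP[|Z| \ge a] \le e^{-a^2/2}$ (for $Z \sim N(0,1)$ and $a \ge 0$) gives
\[
\PP\big[ |f(x) - \mu_{t}(x)| \ge \beta_T^{1/2} \sigma_t(x) \,\big|\, \xv_t, \yv_t \big] \le e^{-\beta_T/2}.
\]
Since this holds for every realization of $(\xv_t, \yv_t)$, it also holds unconditionally; crucially, this remains valid no matter how the points $x_1,\dotsc,x_t$ are selected (in particular, adaptively from past observations), because $\mu_{t}$ and $\sigma_t$ are deterministic functions of the conditioning variables.

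Next I would apply a union bound. For each fixed $t \in \{1,\dotsc,T\}$, applying the above display to each of the $|\Lc|$ points $x \in \Lc$ shows that \eqref{eq:conf_bounds} fails at time $t$ for some $x \in \Lc$ with probability at most $|\Lc|\, e^{-\beta_T/2}$. A further union bound over $t = 1,\dotsc,T$ bounds the total failure probability by $|\Lc| \cdot T \cdot e^{-\beta_T/2}$. Substituting $\beta_T = 2\log\frac{|\Lc| \cdot T}{\delta}$ yields $e^{-\beta_T/2} = \frac{\delta}{|\Lc| \cdot T}$, so the failure probability is at most $\delta$, which is the claim.

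This lemma is essentially a restatement of a known result \cite{Sri09}, so I do not anticipate a genuine obstacle; the only point deserving care is the remark that the conditional Gaussianity of $f(x)$ given the data holds pointwise over realizations of the history, so adaptivity in the choice of sample points causes no complication — no filtration or martingale machinery beyond this simple conditioning step is required, precisely because we work in the Bayesian setting where $f$ is itself random. I would also note in passing why $\beta_T$ here is slightly simpler than in \cite{Sri09}: with a fixed horizon $T$ a single union bound over $t = 1,\dotsc,T$ suffices, whereas an anytime guarantee would require an additional $\sum_t$-summable slack (e.g.\ a $\frac{\pi^2 t^2}{6}$ factor) inside the logarithm.
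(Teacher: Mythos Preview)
Your proof is correct and follows essentially the same approach as the paper: a pointwise Gaussian tail bound giving failure probability $e^{-\beta_T/2}$ for each fixed $(x,t)$, followed by a union bound over the $|\Lc|\cdot T$ pairs. The paper simply cites \cite{Sri09} for the pointwise bound and applies the union bound in one line, whereas you spell out the conditioning argument and the remark on adaptivity, but the substance is identical.
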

\begin{proof}
    It was shown in \cite{Sri09} that for fixed $x$ and $t$, the event $|f(x) - \mu_{t}(x)| \le \beta_T^{1/2} \sigma_t(x)$ holds with probability at least $1-e^{-\beta_T / 2}$.  the lemma follows by substituting the choice of $\beta_T$ and taking the union bound over the $|\Lc| \cdot T$ values of $x$ and $t$.
\end{proof}

The following lemma is also standard, and has been used (implicitly or explicitly) in the study of multiple algorithms that eliminate suboptimal points based on confidence bounds \cite{Fre12,Con13,Bog16a}.  For completeness, we give a short proof.

\begin{lem} \label{lem:elimination}
    Suppose that at time $t$, for all $x$ within a set of points $\Lctil \subseteq D$, it holds that
    \begin{equation}
    \LCB_t(x) \le f(x) \le \UCB_t(x) \label{eq:conf_eta}
    \end{equation}
    for some bounds $\UCB_t$ and $\LCB_t$ such that 
    \begin{equation}
    \max_{x \in \Lctil} \big| \UCB_t(x) - \LCB_t(x) \big| \le 2\eta. \label{eq:cb_assump}
    \end{equation}
    Then any point $x \in \Lctil$ satisfying $f(x) < \max_{x' \in \Lctil} f(x') - 4\eta$ must also satisfy
    \begin{equation}
    \UCB_t(x) < \max_{x \in \Lctil} \LCB_t(x). 
    \end{equation}
    That is, any $4\eta$-suboptimal point can be ruled out according to the confidence bounds \eqref{eq:conf_eta}.
\end{lem}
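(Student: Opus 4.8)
The plan is to reduce the claim to a short chain of inequalities linking the confidence bounds at the suboptimal point $x$ to those at a maximizer of $f$ over $\Lctil$. Let $x^{\dagger} \in \argmax_{x' \in \Lctil} f(x')$, which exists since $\Lctil$ is a (finite) set of points. The one structural fact I would extract first is that, under \eqref{eq:conf_eta} and \eqref{eq:cb_assump}, each of $\UCB_t$ and $\LCB_t$ is pinned within $2\eta$ of $f$ on $\Lctil$: from $\LCB_t(y) \le f(y)$ and $\UCB_t(y) - \LCB_t(y) \le 2\eta$ we get $\UCB_t(y) \le f(y) + 2\eta$, and symmetrically from $\UCB_t(y) \ge f(y)$ we get $\LCB_t(y) \ge f(y) - 2\eta$, for every $y \in \Lctil$.

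Applying the first of these at $y = x$ and the second at $y = x^{\dagger}$, and using the hypothesis $f(x) < f(x^{\dagger}) - 4\eta$, I would then write
\begin{equation}
\UCB_t(x) \le f(x) + 2\eta < f(x^{\dagger}) - 4\eta + 2\eta = f(x^{\dagger}) - 2\eta \le \LCB_t(x^{\dagger}) \le \max_{x' \in \Lctil} \LCB_t(x'),
\end{equation}
which is exactly the desired conclusion $\UCB_t(x) < \max_{x' \in \Lctil} \LCB_t(x')$. The interpretation in the lemma statement — that a $4\eta$-suboptimal point is eliminated by the update rule — is immediate, since such a point fails the membership test $\UCB_t(x) \ge \max_{x'} \LCB_t(x')$.

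I do not anticipate any real obstacle here: the argument is pure bookkeeping, and the only thing requiring care is the direction of each inequality (which side of $f$ each confidence bound lies on, and that the $4\eta$ slack is split as $2\eta + 2\eta$ to absorb the widths at the two points). It may be worth remarking that the constant $4\eta$ is tight for this style of argument, since in the worst case one loses $2\eta$ at $x$ (where only the upper bound is informative) and another $2\eta$ at $x^{\dagger}$ (where only the lower bound is informative).
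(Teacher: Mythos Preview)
Your proof is correct and essentially identical to the paper's: both chain $\UCB_t(x) \le f(x) + 2\eta < \max_{x'} f(x') - 2\eta \le \max_{x'} \LCB_t(x')$ using \eqref{eq:conf_eta} and \eqref{eq:cb_assump}. The only cosmetic difference is that you first isolate the facts $\UCB_t(y) \le f(y)+2\eta$ and $\LCB_t(y) \ge f(y)-2\eta$ and then apply them at $x$ and $x^{\dagger}$, whereas the paper writes the same steps inline as a five-line chain passing through $\max_{x'} \UCB_t(x')$.
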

\begin{proof}
    We have
    \begin{align}
    \UCB_t(x)
    &\le \LCB_t(x) + 2\eta \label{eq:conf_prop_pf0} \\
    &\le f(x) + 2\eta \label{eq:conf_prop_pf1} \\
    &< \max_{x' \in \Lctil} f(x') - 2\eta \label{eq:conf_prop_pf2} \\
    &\le \max_{x' \in \Lctil} \UCB_t(x')  - 2\eta \label{eq:conf_prop_pf3} \\
    &\le \max_{x' \in \Lctil} \LCB_t(x'),  \label{eq:conf_prop_pf4}
    \end{align}
    where \eqref{eq:conf_prop_pf0} and \eqref{eq:conf_prop_pf4} follow from \eqref{eq:cb_assump},  \eqref{eq:conf_prop_pf1} and \eqref{eq:conf_prop_pf3} follow from the confidence bounds in \eqref{eq:conf_eta}, and \eqref{eq:conf_prop_pf2} follows from the assumption $f(x) < \max_{x' \in \Lctil} f(x') - 4\eta$.
\end{proof}

\subsection{Outline of Proof of Theorem \ref{thm:ub}}

Here we provide a high-level outline of the Proof of Theorem \ref{thm:ub}; the details are given in Appendix \ref{sec:pf_ub}.

Algorithm \ref{alg:resample} only samples on a discrete sub-domain $\Lc$.  This set is chosen to be a set of regularly-spaced points that are fine enough to ensure that the cumulative regret with respect to $\max_{x \in \Lc} f(x)$ is within a constant value of the cumulative regret with respect to $\max_{x \in D} f(x)$.  Working with the finite set $\Lc$ helps to simplify the subsequent analysis.

We split the epochs into two classes, which we call {\em early epochs} and {\em late epochs}.  The late epochs are those in which we have shrunk the potential maximizers down enough to be entirely within the locally quadratic region, {\em cf.}, Figure \ref{fig:assumptions}; here we only discuss the second case of Assumption \ref{as:taylor}, which is the more interesting of the two.  Since the width of the locally quadratic region is constant, we can show that this occurs after a {\em finite number of epochs}, each lasting for at most $O(\log T)$ time.  Hence, even if we naively upper bound the instant regret by $2c_0$ according to \eqref{eq:c0}, the overall regret incurred within the early epochs is insignificant.

In the later epochs, we exploit the locally quadratic behavior to show that the set of potential maximizers shrinks rapidly, i.e., by a constant factor after each epoch.  As a result, we can let the repeatedly-sampled set $\Lc_{(i)}$ in Algorithm \ref{alg:resample} lie within a given interval that similarly shrinks, thereby controlling the number of samples we need to take in the epoch.  

By Lemma \ref{lem:elimination}, after we attain uniform $\eta_{(i)}$-confidence, the instant regret incurred at each time thereafter is at most $4\eta_{(i)}$.  Using the fact that $\eta_{(i)} = \eta_{(0)} 2^{-i}$ and summing over the epochs, we find that the overall regret behaves as in \eqref{eq:ub}.

A notable difficulty that we omitted above is how we attain the confidence bounds in order to update the potential maximizers $M_{(i)}$.  While we directly apply Lemma \ref{lem:conf_bounds} for the points that were repeatedly sampled, we found it difficult to do this for the non-sampled points.  For those, we instead use Lipschitz properties of the function.  In the early epochs, we use the global Lipschitz constant $c_1$ from Assumption \ref{as:kernel_diff}, whereas in the later epochs, we find a considerably smaller Lipschitz constant due to the locally quadratic behavior.

\section{Lower Bound} \label{sec:lb}

Our lower bound is formally stated as follows. 

\begin{thm} \label{thm:lb}
    {\em (Lower Bound)}
    Consider the one-dimensional BO problem from Section \ref{sec:bo_setup}, with time horizon $T$ and noise variance $\sigma^2$ satisfying $\sigma^2 \le c'_{\sigma} T^{1-\zeta'}$ for some $c'_{\sigma} > 0$ and $\zeta' > 0$.  Under Assumptions \ref{as:kernel_basic}, \ref{as:kernel_diff}, and \ref{as:endpoints}, any algorithm must yield the following:  With probability at least\footnote{{\bf Correction:} The current proof leads to a probability of $(1-\delta_1-\delta'_2)/2$ rather than $1-\delta_1-\delta'_2$; see Section \ref{sec:errata} for details.} $1-\delta_1-\delta'_2$ (with respect to the Gaussian process $f$), the average cumulative regret (averaged over the noisy samples) satisfies
    \begin{equation}
    \EE[R_T] \ge C'\Big( 1+ \sigma \sqrt{T} \Big). \label{eq:lb}
    \end{equation}
    Here $\delta_1$ and $\delta'_2$ are defined in Assumptions \ref{as:kernel_diff} and \ref{as:endpoints}, and $C'$ depends only on the constants therein and $(c'_{\sigma},\zeta')$.
\end{thm}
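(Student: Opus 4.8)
The plan is to reduce Theorem~\ref{thm:lb} to a hypothesis test and apply Fano's inequality, in the spirit of \cite{Rag11}, but with the hard instances realized \emph{inside} the Gaussian process rather than chosen adversarially. First I would condition on the events of Assumptions~\ref{as:kernel_diff} and \ref{as:endpoints}, which hold with probability at least $1-\delta_1-\delta'_2$, so that it suffices to argue conditionally. On these events the locally quadratic lower sandwich \eqref{eq:Taylor_opt2} gives $f(x^*)-f(x_t)\ge\cunder_2\min\big((x_t-x^*)^2,\rho_0^2\big)$ for every $t$, hence
\[
\EE[R_T]\;\ge\;\cunder_2\sum_{t=1}^T\EE\Big[\min\big((x_t-x^*)^2,\rho_0^2\big)\Big],
\]
and it is enough to show that no algorithm can push this sum below order $\sigma\sqrt{T}$; equivalently, that the maximizer cannot be localized to precision $\Delta_T\asymp(\sigma^2/T)^{1/4}$ on more than a vanishing fraction of the $T$ rounds, since $T\Delta_T^2\asymp\sigma\sqrt{T}$.

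Next I would construct the hard ensemble. Fix $\Delta=\kappa(\sigma^2/T)^{1/4}$ for a small constant $\kappa$; the hypothesis $\sigma^2\le c'_\sigma T^{1-\zeta'}$ ensures $\Delta\to 0$, so $1/\Delta\to\infty$. Tile a fixed subinterval of $(\rho_0,1-\rho_0)$ by $M\asymp 1/\Delta$ disjoint windows of width $\Theta(\Delta)$, centred at $a_1,\dots,a_M$. Using non-degeneracy of the kernel, for each $j$ I would define a conditioning event $\mathcal E_j$ that pins $f$ to a fixed admissible realization outside the $j$-th window while forcing the unique global maximizer to lie inside it, and then check that $\mathcal E_j$ has positive probability, is contained in the good events, and respects the constants of Assumptions~\ref{as:kernel_diff}--\ref{as:endpoints}, with \eqref{eq:Taylor_general} used to control the conditioned ``bridge'' of $f$ across the window. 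Because $f$ is pinned at distance $\Theta(\Delta)$ from its peak, the quadratic sandwich forces the bump inside the window to have height $\Theta(\Delta^2)$; and because $f$ agrees with the fixed realization everywhere else, any two of the conditional laws $\PP[\,\cdot\mid\mathcal E_j]$ give observation distributions whose means differ by only $O(\Delta^2)$ at any point, and coincide for query points outside all windows.

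Now let $W$ be uniform over $\{1,\dots,M\}$ and run the algorithm against $\PP[\,\cdot\mid\mathcal E_W]$. Any sample not near the peak of window $W$ costs $\Omega(\Delta^2)$ regret --- a sample outside all windows because the pinned realization's supremum is $\Omega(\Delta^2)$ below the peak, a sample in a wrong window because no bump is present there, and an in-window sample far from $x^*$ by \eqref{eq:Taylor_opt2}. So if $\EE[R_T]\le c_*\sigma\sqrt{T}$ for a small enough constant $c_*$, then on all but an $o(1)$ fraction of its probability mass the algorithm places $(1-o(1))T$ samples near the peak of window $W$, so a plug-in estimator recovers $W$ with error $o(1)$, and Fano's inequality gives $I(W;\text{observations})\ge(1-o(1))\log M$. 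On the other hand, by the chain rule and the Markov relation $W\to(f(x_t),\text{past})\to y_t$,
\[
I(W;\text{observations})\;\le\;\sum_{t=1}^T I\big(W;y_t\mid\text{past}\big)\;=\;O\!\Big(\tfrac{\Delta^4}{\sigma^2}\Big)\,\EE\big[\#\{t:x_t\text{ lies in a window}\}\big]\;=\;O(\kappa^4),
\]
because each term is $O(\Delta^4/\sigma^2)$ when $x_t$ lies in a window (the law of $y_t$ then shifts by $O(\Delta^2)$ across values of $W$) and $0$ otherwise, and $T\Delta^4/\sigma^2=\kappa^4$. For $\kappa$ small and $T$ large this contradicts $\log M\to\infty$, giving $\EE[R_T]\gtrsim\sigma\sqrt{T}$. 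The additive constant and the small-$T$ regime are handled directly (one round already incurs $\Omega(1)$ expected regret for a constant fraction of the functions), and the ``with probability at least $1-\delta_1-\delta'_2$ over $f$'' form follows because the information budget $O(\kappa^4)$ caps, via Fano applied across all $M$ windows, the fraction of windows --- hence of functions in the good events --- whose maximizer the algorithm can pre-localize, and this fraction vanishes.

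I expect the main obstacle to be the ensemble construction itself: producing, \emph{within} the GP prior, a family of functions that simultaneously (i) all lie in the good events with the prescribed constants, (ii) differ only inside a window of width $\Theta(\Delta)$ and there by only $O(\Delta^2)$, and (iii) each carry positive prior mass. Since the kernel is only assumed stationary rather than compactly supported, pinning $f$ outside a window is a genuine conditioning on an infinite-dimensional event, so one must invoke non-degeneracy of the kernel --- together with the Taylor control \eqref{eq:Taylor_general} for the conditioned bridge --- to show the construction is non-vacuous and that the conditional laws are as close as claimed. Tracking how the constants of Assumptions~\ref{as:kernel_diff}--\ref{as:endpoints} survive this conditioning, and making the ``localizes $W$ $\Rightarrow$ Fano'' step precise with the pinned-boundary bridge, is where I expect the bulk of the technical effort to go.
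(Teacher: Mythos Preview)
Your plan has a genuine gap in how the hard ensemble sits inside the Bayesian model. The theorem asks for a lower bound that holds with probability at least $1-\delta_1-\delta'_2$ \emph{over the GP draw of $f$}; that is, for a $(1-\delta_1-\delta'_2)$-fraction of functions under the prior one needs $\EE_{\text{noise}}[R_T\mid f]\ge C'(1+\sigma\sqrt{T})$. Your events $\mathcal{E}_j$ pin $f$ to a fixed realization outside a window of width $\Theta(\Delta)$. For the SE kernel this is fatal: the sample paths are almost surely real-analytic, so fixing $f$ on any open set determines it on all of $[0,1]$, and there is no residual freedom for the ``bump''---the events $\mathcal{E}_j$ are empty. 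For Mat\'ern-$\nu$ with $\nu>2$ the paths are not analytic, but pinning to a \emph{specific} realization on an open set is still a measure-zero conditioning, so ``check that $\mathcal{E}_j$ has positive probability'' cannot succeed as stated. Even granting some soft version of the construction, the closing step (``the fraction of windows---hence of functions in the good events---whose maximizer the algorithm can pre-localize vanishes'') does not follow: the $\mathcal{E}_j$ are not a partition of the good event, they are a null (or near-null) sliver of it, and Fano across the $M$ windows says nothing about a typical $f$ drawn from the prior.

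The paper sidesteps all of this by using stationarity to embed the hypothesis test \emph{inside the prior itself}. One draws $f_0$ from the same GP on the enlarged domain $[-\Delta,1+\Delta]$, sets $V$ uniform on $\{+,-\}$ independently, and defines $f(x)=f_0(x+V\Delta)$ on $[0,1]$. By translation invariance of the kernel this \emph{is} the original prior, so revealing $f_0$ to the algorithm is a pure genie argument, and the only remaining randomness in $f$ is the binary shift $V$. Conditioning on $f_0$ lying in the events of Assumptions~\ref{as:kernel_diff} and~\ref{as:endpoints} (probability $\ge 1-\delta_1-\delta'_2$) then gives, for every such $f_0$, two legitimate prior-supported functions $f_{\pm}$ whose regret curves differ by $O(\Delta|x-x_0^*|+\Delta^2)$ near the peak (Lemma~\ref{lem:r_properties}). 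A binary Fano bound (Lemma~\ref{lem:Fano}) with $\Delta\asymp(\sigma^2/T)^{1/4}$, together with a case split on whether $\EE[R_T]\gtrless c''T\Delta^2$ and a Cauchy--Schwarz step to control $\sum_t\EE[(x_t-x_0^*)^2]$ (Lemma~\ref{lem:contr}), finishes the proof. The idea you are missing is precisely this random-shift decomposition: it manufactures, for free and without any pinning or null-set conditioning, the two nearby hard instances you were trying to build by hand.
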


The assumption that $\sigma^2 \le c'_{\sigma} T^{1-\zeta'}$ for some $(c'_{\sigma},\zeta')$ is very mild, since typically $\sigma^2$ is constant with respect to $T$.  The assumption is required to avoid \eqref{eq:lb} contradicting the trivial $O(T)$ upper bound.  We also note that Theorem \ref{thm:lb} immediately implies an $\Omega\big( 1+ \sigma \sqrt{T} \big)$ lower bound on the expected regret $\EE[R_T]$ with respect to {\em both} $f$ and the noisy samples, as long as $1 - \delta_1 - \delta'_2 > 0$.  As discussed following Assumption \ref{as:endpoints}, the latter condition is mild.

In the remainder of the section, we introduce some of the main tools and ideas, and then outline the proof.  We note that $\EE[R_T] = \Omega(1)$ is trivial, as the average regret of the first sample alone is lower bounded by a constant.  As a result, we only need to show that $\EE[R_T] = \Omega( \sigma\sqrt{T} )$.

\subsection{Reduction to Binary Hypothesis Testing} \label{sec:reduction}

Recall that $f$ is a one-dimensional GP on $[0,1]$ with a stationary kernel $k(x,x')$.  We fix $\Delta > 0$, and think of the GP as being generated by the following procedure:
\begin{enumerate}
    \item Generate a GP $f_0$ with the same kernel on the larger domain $[-\Delta, 1+\Delta]$;
    \item Randomly shift $f_0$ along the $x$-axis by $+\Delta$ or $-\Delta$ with equal probability, to obtain $\ftil$;
    \item Let $f(x) = \ftil(x)$ for $x \in [0,1]$.
\end{enumerate}
Since the kernel is stationary, the shifting does not affect the distribution, so the induced distribution of $f$ is indeed the desired GP on $[0,1]$.

\begin{figure}
    \centering
    \includegraphics[width=0.4\textwidth]{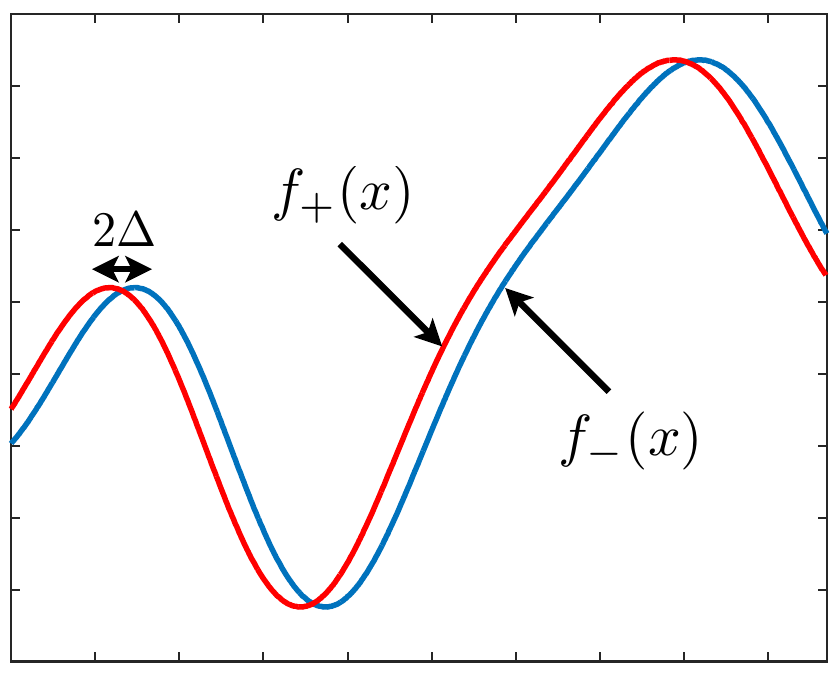}
    \par
    \caption{Examples of functions $f_+$ and $f_-$ considered in the lower bound.  The two are identical up to a small horizontal shift. \label{fig:lb_example}}
\end{figure}

\begin{figure*}
    \centering
    \includegraphics[width=0.65\textwidth]{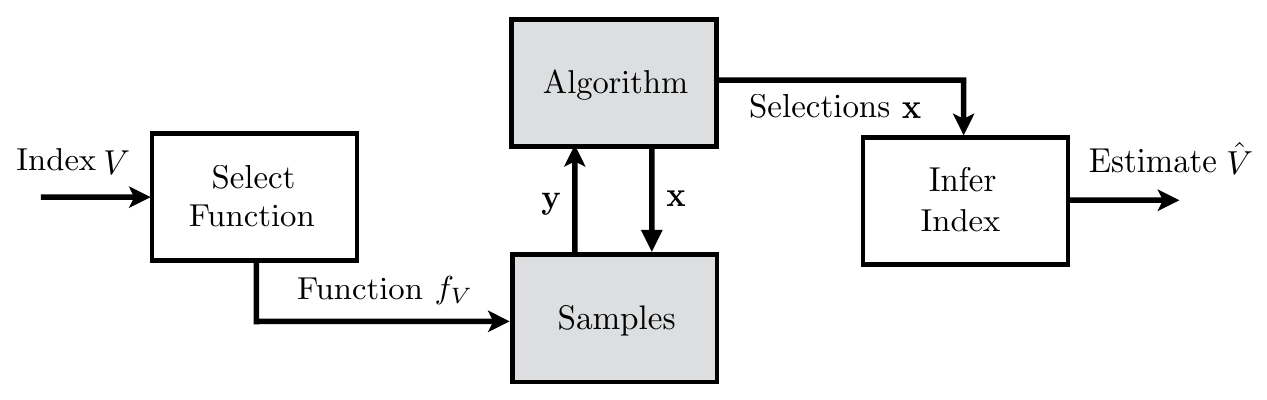}
    \par
    \caption{Illustration of reduction from optimization to binary hypothesis testing.  The gray boxes are considered to be fixed, whereas the white boxes are introduced for the purpose of proving the lower bound. \label{fig:reduction}}
    \vspace*{-2ex}
\end{figure*}

We consider a genie argument in which {\em $f_0$ is revealed to the algorithm}.  Clearly this additional information can only help the algorithm, so any lower bound still remains valid for the original setting.  Stated differently, the algorithm knows that $f$ is either $f_+$ or $f_-$, where
\begin{align}
f_+(x) &= f_0(x+\Delta), \label{eq:f+} \\
f_-(x) &= f_0(x-\Delta). \label{eq:f-}
\end{align}
See Figure \ref{fig:lb_example} for an illustrative example.  

This argument allows us to reduce the BO problem to a {\em binary hypothesis test} with {\em adaptive sampling}, as depicted in Figure \ref{fig:reduction}.  The hypothesis, indexed by $v \in \{-,+\}$, is that the underlying function is $f_v$.  We show that under a suitable choice of $\Delta$, achieving small cumulative regret means that we can construct a decision rule $\hat{V}(\xv)$ such that $\hat{V} = v$ with high probability, i.e., the hypothesis test is successful.  The contrapositive statement is then that {\em if the hypothesis test cannot be successful, we cannot achieve small cumulative regret}, from which it only remains to prove the former.  This idea was used previously for stochastic convex optimization in \cite{Rag11}.

In the remainder of the analysis, we implicitly condition on an arbitrary realization of $f_0$, meaning that all expectations and probabilities are only with respect to the random index $V$ and/or the noise.  We also assume that $f_0$ satisfies the conditions in Assumptions \ref{as:kernel_basic}, \ref{as:kernel_diff}, and \ref{as:endpoints}, which holds with probability at least $1 - \delta_1 - \delta'_2$.   For sufficiently small $\Delta$, the same assumptions are directly inherited by $f_+$ and $f_-$.  We henceforth assume that $\Delta$ is indeed sufficiently small; we will verify that this is the case when we set its value.

We introduce some further notation.  Letting $x_+^*$, $x_-^*$, and $x_0^*$ denote the maximizers of $f_+$, $f_-$ and $f_0$ (which are unique by Assumption \ref{as:kernel_diff}), we see that Assumption \ref{as:endpoints} ensures these are in the interior $(0,1)$, and hence the optimal values coincide: $f_+(x_+^*) = f_-(x_-^*) = f_0(x_0^*) =: f^*$.  To simplify some of the notation, instead of working with these functions directly, we consider the equivalent problem of {\em minimizing the corresponding regret functions}:
\begin{align}
r_+(x) &= f^* - f_+(x), \\
r_-(x) &= f^* - f_-(x).
\end{align}
Indeed, since we assume the algorithm knows $f_0$ and hence also the optimal value $f^*$, it can always choose to transform the samples as $y \to f^* - y$.  In this form, we have the convenient normalization $r_+(x_+^*) = r_-(x_-^*) = 0$.

\subsection{Auxiliary Lemmas}

We first state the following useful properties of $r_+$ and $r_-$.

\begin{lem} \label{lem:r_properties}
    The functions $r_+$ and $r_-$ constructed above satisfy the following for sufficiently small $\Delta$ under the conditions in Assumptions \ref{as:kernel_diff} and \ref{as:endpoints}:
    \begin{enumerate}
        \item We have for all $x \in D$ that
        \begin{equation}
        r_+(x) < \cunder_2 \Delta^2 \implies r_-(x) > \cunder_2 \Delta^2, \label{eq:prop1}
        \end{equation}
        where $\cunder_2$ is defined in Assumption \ref{as:endpoints}.  
        \item There exists a constant $c' > 0$ such that, for all $x \in D$,
        \begin{gather}
        |r_+(x) - r_-(x)| \le c'\big( \Delta |x - x_0^*| + \Delta^2 \big).  \label{eq:prop2}
        \end{gather}
        \item There exists a constant $c'' > 0$ such that, for all $x \in D$, 
        \begin{align}
        r_+(x) &\ge c'' ( (x - x_0^*) + \Delta )^2, \nonumber \\
        r_-(x) &\ge c'' ( (x - x_0^*) - \Delta )^2. \label{eq:quadratic_regret}
        \end{align}
    \end{enumerate}
\end{lem}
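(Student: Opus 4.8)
The three claims are all consequences of Taylor-expanding around the common point $x_0^*$ and exploiting the relation $f_\pm(x) = f_0(x\pm\Delta)$, so the overarching strategy is to reduce everything to statements about $f_0$ (equivalently $r_0(x) := f^* - f_0(x)$) via the substitutions $r_+(x) = r_0(x+\Delta)$ and $r_-(x) = r_0(x-\Delta)$, then apply Assumption \ref{as:endpoints}. For part 3 I would first record that, since $x_0^*$ is the maximizer of $f_0$ and lies in the interior, \eqref{eq:Taylor_opt2} applied to $f_0$ gives $r_0(y) \ge \cunder_2 (y - x_0^*)^2$ for $|y - x_0^*| \le \rho_0$; substituting $y = x \pm \Delta$ yields the claimed quadratic lower bounds with $c'' = \cunder_2$ on the range where $|x - x_0^* \pm \Delta| \le \rho_0$, and for $x$ outside that range one uses the global lower bound $r_0 \ge \epsilon$ on the complement of a neighborhood of $x_0^*$ (from Assumptions \ref{as:kernel_diff}--\ref{as:endpoints}), which for small $\Delta$ dominates $c''(\rho_0 + \Delta)^2$ after possibly shrinking $c''$; here and throughout, ``sufficiently small $\Delta$'' absorbs the boundary cases. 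Part 1 is then essentially a corollary: if $r_+(x) < \cunder_2\Delta^2$ then by the first bound in \eqref{eq:quadratic_regret} we get $c''((x-x_0^*)+\Delta)^2 < \cunder_2\Delta^2$, which (taking $c'' = \cunder_2$) forces $|(x - x_0^*) + \Delta| < \Delta$, i.e. $x - x_0^* \in (-2\Delta, 0)$, whence $(x - x_0^*) - \Delta \in (-3\Delta, -\Delta)$ so $|(x-x_0^*) - \Delta| > \Delta$ and the second bound in \eqref{eq:quadratic_regret} gives $r_-(x) > c''\Delta^2 = \cunder_2\Delta^2$. One should double-check the constant bookkeeping so that the same $\cunder_2$ appears on both sides; if the clean $c'' = \cunder_2$ identification fails near the boundary, replace $\Delta^2$ thresholds by $c\Delta^2$ for a suitable $c$, but the statement as written suggests the straightforward Taylor bound suffices.

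For part 2, the idea is to write $r_+(x) - r_-(x) = r_0(x+\Delta) - r_0(x-\Delta)$ and Taylor-expand each term around $x$ using the two-sided bound \eqref{eq:Taylor_general} (applied to $f_0$, hence to $r_0$ with reversed inequality signs): for $|\Delta| \le \rho_0$,
\begin{equation*}
\Delta \, r_0'(x) + \cunder_2' \Delta^2 \le r_0(x+\Delta) - r_0(x) \le \Delta \, r_0'(x) + \cbar_2' \Delta^2,
\end{equation*}
and similarly with $-\Delta$ in place of $\Delta$; subtracting, the linear terms combine to $2\Delta\, r_0'(x)$ and the quadratic remainders are each $O(\Delta^2)$, giving $|r_+(x) - r_-(x)| \le 2|\Delta|\,|r_0'(x)| + 2\max(|\cbar_2'|,|\cunder_2'|)\Delta^2$. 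It then remains to bound $|r_0'(x)| = |f_0'(x)|$ in terms of $|x - x_0^*|$. Since $x_0^*$ is an interior maximizer, $f_0'(x_0^*) = 0$, and $f_0'$ is $c_2$-Lipschitz by Assumption \ref{as:kernel_diff}, so $|f_0'(x)| \le c_2 |x - x_0^*|$; plugging this in gives $|r_+(x) - r_-(x)| \le 2 c_2 |\Delta| |x - x_0^*| + (\text{const})\Delta^2$, which is exactly \eqref{eq:prop2} with a suitable $c'$. The case $|\Delta| > \rho_0$ does not arise because we restrict to sufficiently small $\Delta$; the only mild subtlety is that \eqref{eq:Taylor_general} requires $x + \xi \in D$, so near the endpoints $0$ and $1$ one needs $x \pm \Delta$ to stay in $D = [0,1]$, which again holds for small $\Delta$ except within distance $\Delta$ of the endpoints — there one can fall back on the crude bound $|r_+ - r_-| \le c_1 \cdot 2\Delta$ from $c_1$-Lipschitzness of $f$, absorbed into the $\Delta^2$ term only if $|x - x_0^*|$ is bounded below, which it is since $x_0^* \in (\rho_0, 1-\rho_0)$ is bounded away from the endpoints.

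The main obstacle I anticipate is not any single inequality but the careful handling of the ``for all $x \in D$'' quantifier: the Taylor-type assumptions \eqref{eq:Taylor_general}, \eqref{eq:Taylor_opt2} are only local (valid for $|\xi| \le \rho_0$ or within $\rho_0$ of $x_0^*$), so each of the three parts needs a separate argument patching the local Taylor estimate near $x_0^*$ together with a global ``bounded away from the max'' estimate (using $\epsilon$, $c_0$, $c_1$) on the rest of the domain, and then checking that for $\Delta$ small enough the constants line up so the clean form of each bound holds uniformly. The endpoint-proximity cases for part 2 are the fiddliest, since there the Taylor expansion around $x$ may reference points outside $D$; the fix is to note $x_0^*$ is a fixed interior point, so $|x - x_0^*| \ge$ (distance of $x_0^*$ to the nearer endpoint) whenever $x$ itself is within $\Delta$ of an endpoint, allowing the crude Lipschitz bound to be rewritten in the required form $c'(\Delta|x-x_0^*| + \Delta^2)$. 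None of this is deep, but it is where the "sufficiently small $\Delta$" hypothesis earns its keep.
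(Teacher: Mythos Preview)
Your approach is essentially the same as the paper's. For part 2, both you and the paper write $r_\pm(x)=r_0(x\pm\Delta)$, apply the Taylor-type bound \eqref{eq:Taylor_general} to $r_0$, and use $|r_0'(x)|\le c_2|x-x_0^*|$ from the $c_2$-Lipschitzness of $f'$ together with $f_0'(x_0^*)=0$; for part 3, both patch the local quadratic bound \eqref{eq:Taylor_opt2} near the optimizer with the global $r\ge \min(\epsilon,\cunder_2\rho_0^2)$ bound away from it, shrinking $c''$ as needed. Two minor remarks: (i) your derivation of part 1 as a corollary of part 3 is fine, but note that the paper proves part 1 directly (and first), which avoids the constant-bookkeeping worry you flag --- once $r_+(x)<\cunder_2\Delta^2<\epsilon$ forces $x$ into the $\rho_0$-neighborhood of $x_+^*$, the \emph{exact} constant $\cunder_2$ from \eqref{eq:Taylor_opt2} applies there, so no fallback to a generic $c\Delta^2$ is needed; (ii) your endpoint discussion for part 2 is correct but unnecessary in the paper's setup, since $f_0$ is defined on the enlarged domain $[-\Delta,1+\Delta]$ and the assumptions are taken to hold for $f_0$ on that domain, so $x\pm\Delta$ never leaves the domain of $r_0$ for $x\in[0,1]$.
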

\begin{proof}
    See Appendix \ref{sec:pf_r_properties}.
\end{proof}

The first part states that any point can be better than $\cunder_2 \Delta^2$-optimal for at most one of the two functions, the second part shows that the two functions are close for points near $x_0^*$, and the third part shows that the instant regret is lower bounded by a quadratic function.

The first part of the Lemma \ref{lem:r_properties} allows us to bound the cumulative regret using Fano's inequality for binary hypothesis testing with adaptive sampling \cite{Rag11}.  This inequality lower bounds the success probability of such a hypothesis test in terms of a mutual information quantity \cite{Cov01}.  The resulting lower bound on regret is stated in the following; it is worth noting that the consideration of {\em cumulative} regret here provides a distinction from the analogous bound on the instant regret in \cite{Rag11}.

\begin{lem} \label{lem:Fano}
    Under the preceding setup, we have
    \begin{equation}
    \EE[R_T] \ge \cunder_2 T\Delta^2 \cdot H_2^{-1}\big( \log 2 - I(V;\xv,\yv) \big), \label{eq:Fano}
    \end{equation}
    where $V$ is equiprobable on $\{+,-\}$, and $(\xv,\yv)$ are the selected points and samples when the minimization algorithm is applied to $r_V$.  Here $H_2^{-1} \,:\, [0,\log 2] \to \big[0,\frac{1}{2}\big]$ is the functional inverse of the binary entropy function $H_2(\alpha) = \alpha\log\frac{1}{\alpha} + (1-\alpha)\log\frac{1}{1-\alpha}$.
\end{lem}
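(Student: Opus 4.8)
\emph{Proof plan.} The strategy is to convert the cumulative regret into a sum of $T$ binary hypothesis tests, each of which is no easier than deciding $V\in\{+,-\}$ from the entire data $(\xv,\yv)$, and then apply Fano's inequality to each summand. Fix a realization of $f_0$ as in the setup, so that all probabilities are over $V$ and the noise. Define the deterministic map $g\colon D\to\{+,-\}$ by $g(x)=+$ if $r_+(x)<\cunder_2\Delta^2$ and $g(x)=-$ otherwise. By part~1 of Lemma~\ref{lem:r_properties}, the set $\{x:r_+(x)<\cunder_2\Delta^2\}$ and the set $\{x:r_-(x)<\cunder_2\Delta^2\}$ are disjoint, so $g$ returns the correct index whenever the queried point is $\cunder_2\Delta^2$-optimal for $r_V$. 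For each round $t$ let $\hat V_t=g(x_t)$; since $x_t$ is a coordinate of the vector $\xv$, this $\hat V_t$ is a deterministic function of $(\xv,\yv)$.

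\emph{Step 1: the instant regret dominates the per-round test error.} I claim the pointwise bound
\[
r_V(x_t)\ \ge\ \cunder_2\Delta^2\cdot\openone[\hat V_t\neq V].
\]
If $\hat V_t=V$ this is immediate from $r_V\ge 0$ (because $f^*=\max_x f_V(x)$). If $\hat V_t\neq V$, there are two cases: either $g(x_t)=+$ and $V=-$, in which case $r_+(x_t)<\cunder_2\Delta^2$, hence $r_-(x_t)>\cunder_2\Delta^2$ by Lemma~\ref{lem:r_properties}(1), i.e.\ $r_V(x_t)>\cunder_2\Delta^2$; or $g(x_t)=-$ and $V=+$, in which case $r_+(x_t)\ge\cunder_2\Delta^2$ directly by the definition of $g$, i.e.\ $r_V(x_t)\ge\cunder_2\Delta^2$. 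Summing over $t$ and taking expectations,
\[
\EE[R_T]=\sum_{t=1}^T\EE[r_V(x_t)]\ \ge\ \cunder_2\Delta^2\sum_{t=1}^T\PP[\hat V_t\neq V].
\]

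\emph{Step 2: a uniform Fano bound.} Since $V$ is uniform on a two-point set, Fano's inequality gives for each $t$ that $H_2(\PP[\hat V_t\neq V])\ge H(V\mid\hat V_t)=\log 2-I(V;\hat V_t)$. As $\hat V_t$ is a function of $(\xv,\yv)$, the data-processing inequality yields $I(V;\hat V_t)\le I(V;\xv,\yv)$, so $H_2(\PP[\hat V_t\neq V])\ge \log 2-I(V;\xv,\yv)$ for \emph{every} $t$. Because $H_2$ is increasing on $[0,\tfrac12]$ and $H_2^{-1}$ maps into $[0,\tfrac12]$, this rearranges to $\PP[\hat V_t\neq V]\ge H_2^{-1}(\log 2-I(V;\xv,\yv))$: if it failed, we would have $\PP[\hat V_t\neq V]<H_2^{-1}(\log 2-I(V;\xv,\yv))\le\tfrac12$, and applying the increasing map $H_2$ on $[0,\tfrac12]$ would contradict the previous inequality. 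Substituting into Step~1,
\[
\EE[R_T]\ \ge\ \cunder_2\Delta^2\cdot T\cdot H_2^{-1}\big(\log 2-I(V;\xv,\yv)\big),
\]
which is exactly \eqref{eq:Fano}.

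\emph{Main obstacle and remarks.} All the real content sits in Step~1: the point is that an algorithm with small instant regret at round $t$ is, via the disjointness in Lemma~\ref{lem:r_properties}(1), implicitly producing a correct guess of $V$ \emph{from its own query point}, so the per-round regret lower-bounds a genuine testing error. The remaining machinery is the standard binary-Fano-plus-data-processing argument. The one delicate point is that the bound in Step~2 must hold with the \emph{same} mutual information $I(V;\xv,\yv)$ simultaneously for all $T$ rounds, so that the $T$ terms add up to produce the factor $T$; this is precisely why one bounds $I(V;\hat V_t)$ by the full-horizon quantity rather than by $I(V;\xv_{t-1},\yv_{t-1})$, and it is the feature that distinguishes this \emph{cumulative}-regret statement from the instant-regret bound of \cite{Rag11}. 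Note finally that $\Delta$ is still free here; the subsequent work (after this lemma) is to choose $\Delta\asymp\sigma/\sqrt{T}$ so that $T\Delta^2\asymp\sigma^2$ while parts~2--3 of Lemma~\ref{lem:r_properties} keep $I(V;\xv,\yv)$ bounded away from $\log 2$.
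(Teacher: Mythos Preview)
Your proof is correct and is actually a somewhat different (and cleaner) route than the paper's. The paper proceeds by Markov's inequality, $\EE[R_T]\ge(1-\alpha)\cunder_2T\Delta^2\,\PP[R_T\ge(1-\alpha)\cunder_2T\Delta^2]$, then constructs a \emph{single} estimator $\hat V$ (the index whose cumulative regret is smaller over the whole trajectory), argues that $R_{T,v}\le(1-\alpha)\cunder_2T\Delta^2$ forces $\hat V=v$, applies Fano to this one test, and finally sends $\alpha\to 0$. Your argument instead works \emph{per round}: you build a separate estimator $\hat V_t=g(x_t)$ from the query point alone, use Lemma~\ref{lem:r_properties}(1) to get the pointwise inequality $r_V(x_t)\ge\cunder_2\Delta^2\,\openone[\hat V_t\neq V]$, and then apply Fano plus data processing uniformly over $t$. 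This avoids both the Markov step and the limiting argument, and the way the asymmetric definition of $g$ yields the two cases of Step~1 is neat. The paper's single-estimator approach, on the other hand, is the direct instantiation of the Raginsky--Rakhlin template and generalizes more transparently to $M$-ary reductions.

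One small correction, though it is outside the lemma itself: in your closing remark you write that one later chooses $\Delta\asymp\sigma/\sqrt{T}$ so that $T\Delta^2\asymp\sigma^2$. The actual choice in the proof of Theorem~\ref{thm:lb} is $\Delta\asymp(\sigma^2/T)^{1/4}$, giving $T\Delta^2\asymp\sigma\sqrt{T}$; this is what combines with parts~2--3 of Lemma~\ref{lem:r_properties} to keep $I(V;\xv,\yv)$ bounded while making the right-hand side of \eqref{eq:Fano} match the target lower bound.
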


Since this result is particularly fundamental to our analysis, we provide a proof at the end of this section.

\subsection{Outline of Proof of Theorem \ref{thm:lb}}

Here we provide a high-level outline of the proof of Theorem \ref{thm:lb}; the details are given in Appendix \ref{sec:pf_lb}.

Once the lower bound in Lemma \ref{lem:Fano} is established, the main technical challenge is upper bounding the mutual information.  A useful property called {\em tensorization} (e.g., see \cite{Rag11}) allows us to simplify the mutual information with the vectors $(\xv,\yv)$ to a sum of mutual informations containing only a single pair $(x_t,y_t)$: $I(V;\xv,\yv) \le \sum_{t=1}^T I(V;y_t|x_t)$.  

Each such mutual information term $I(V;y_t|x_t)$ can further be upper bounded by the KL divergence \cite{Cov01} between the conditional output distributions corresponding to $r_+$ and $r_-$, which in turn equals $\frac{(r_+(x) - r_-(x))^2}{2\sigma^2}$ when $x_t = x$.  By substituting the property \eqref{eq:prop2} given in Lemma \ref{lem:r_properties}, we find that $I(V;\xv,\yv)$ is upper bounded by a constant times $\frac{1}{\sigma^2}\big(\Delta^2 \EE\big[ \sum_{t=1}^T |x_t - x_0^*|^2 \big] + T\Delta^4\big)$.  If we can further upper bound $I(V;\xv,\yv)$ by a constant in $(0,\log 2)$, then \eqref{eq:Fano} establishes an $\Omega(T \Delta^2)$ lower bound.

We proceed by considering the cases $\EE[R_T] \ge c'' T \Delta^2$ and $\EE[R_T] < c'' T \Delta^2$ separately, with $c''$ given in \eqref{eq:quadratic_regret}.  The former case will immediately give the lower bound in Theorem \ref{thm:lb} when we set $\Delta$, whereas in the latter case, we can use \eqref{eq:quadratic_regret} to show that $\EE\big[ \sum_{t=1}^T |x_t - x_0^*|^2 \big]$ is upper bounded by a constant times $T \Delta^2$, which means that the desired mutual information upper bound (see the previous paragraph) is attained under a choice of $\Delta$ scaling as $\big( \frac{\sigma^2}{T} \big)^{1/4}$.  Under this choice, the lower bound $\EE[R_T] = \Omega(T \Delta^2)$  evaluates to $\Omega(\sigma\sqrt{T})$, as required.

\subsection{Proof of Lemma \ref{lem:Fano}} \label{sec:pf_Fano}

As mentioned above, the proof of Lemma \ref{lem:Fano} follows along the lines of \cite{Rag11}, which in turn builds on previous works using Fano's inequality to establish minimax lower bounds in statistical estimation problems; see for example \cite{Yu97}.

In the following, we use $R_{T,+} = \sum_{t=1}^T r_+(x_t)$ and $R_{T,-} = \sum_{t=1}^T r_-(x_t)$ to denote the cumulative regret associated with $r_+$ and $r_-$, respectively, and we generically write $R_{T,v}$ to denote one of the two with $v \in \{+,-\}$.

We first use Markov's inequality to write
\begin{equation}
\EE[ R_T ] \ge (1-\alpha)\cunder_2 T \Delta^2 \cdot  \PP[ R_T \ge (1-\alpha)\cunder_2 T \Delta^2 ] \label{eq:ConvExcessDist} 
\end{equation}
for any $\alpha \in (0,1)$.  We proceed by analyzing the probability on the right-hand side.

Recall that $V$ is equiprobable on $\{+,-\}$, and $(\xv,\yv)$ are generated by running the optimization algorithm on $r_V$.  Given the sequence of inputs $\xv$, let $\hat{V}$ be the index $\hat{v} \in \{+,-\}$ with the lower cumulative regret $R_{T,\hat{v}} = \sum_{t=1}^T r_{\hat{v}}(x_t)$.  By Lemma \ref{lem:r_properties}, $R_T$ can be less than $\cunder_2 T \Delta^2$ for at most one of the two functions, and hence, if $R_{T,v} \le (1-\alpha)\cunder_2 T \Delta^2$ then we must have $\hat{V} = v$.  Therefore, 
\begin{equation}
\PP_{v}\big[ R_T \ge (1-\alpha)\cunder_2 T \Delta^2 \big] \ge \PP_v[\hat{V} \ne v], \label{eq:ConvExcessDistProb} 
\end{equation}
where, here and subsequently, $\PP_v$ and $\EE_v$ denote probabilities and expectations when the underlying instant regret function is $r_v$ (i.e., the underlying function that the algorithm seeks to maximize is $f_v$).

Continuing, we can lower bound the probability appearing in \eqref{eq:ConvExcessDist} as follows:
\begin{align}
&\PP[ R_T \ge (1-\alpha)\cunder_2 T \Delta^2 ] \nonumber \\
&\quad= \frac{1}{2} \sum_{v \in \{+,-\} } \PP_{v}\big[ R_T \ge (1-\alpha)\cunder_2 T \Delta^2 \big] \label{eq:ConvexEnd1} \\
&\quad\ge \frac{1}{2} \sum_{v \in \{+,-\} } \PP_{v}[\hat{V} \ne v] \label{eq:ConvexEnd2}  \\
&\quad\ge H_2^{-1}\big( \log 2 - I(V;\xv,\yv) \big), \label{eq:ConvexEnd4} 
\end{align}
where \eqref{eq:ConvexEnd2} follows from \eqref{eq:ConvExcessDistProb}, and \eqref{eq:ConvexEnd4} follows from Fano's inequality for binary hypothesis testing with adaptive sampling (see Eq.~(22) and (24) of \cite{Rag11}).  The proof is completed by combining \eqref{eq:ConvExcessDist} and \eqref{eq:ConvexEnd4}, and recalling that $\alpha$ can be arbitrarily small.

\section{Conclusion and Discussion}

We have established tight scaling laws on the regret for Bayesian optimization in one dimension, showing that the optimal scaling is $\Omega(\sqrt{T})$ and $O(\sqrt{T\log T})$ under mild technical assumptions on the kernel.  Our results highlight some limitations of the widespread upper bounds based on the information gain, as well as providing cases where the noisy Bayesian setting is provably less difficult than its non-Bayesian RKHS counterpart.

An immediate direction for further work is to sharpen the constant factors in the upper and lower bounds, and to establish whether the upper bound is attained by any algorithm that can also provide state-of-the-art performance in practice.  We re-iterate that our algorithm is certainly not suitable for this purpose, as its cumulative regret contains large constant factors, and the algorithm makes use of a variety of specific constants present in the assumptions (though they are merely a function of the kernel).

We expect our techniques to extend to any constant dimension $d \ge 1$; the main ideas from the noiseless upper bound still apply \cite{Fre12}, and in the lower bound we can choose an arbitrary single dimension and introduce a random shift in that direction as per Section \ref{sec:reduction}.  While these extensions may still yield $\sqrt T \,\mathrm{poly}(\mathrm{\log} T)$ regret, the dependence on $d$ would be exponential or worse in the upper bound, but constant in the lower bound, with the latter dependence certainly being suboptimal.  Multi-dimensional lower bounding techniques based on Fano's inequality \cite{Rag11} may improve the latter to $\mathrm{poly}(d)$, but overall, attaining a sharp joint dependence on $T$ and $d$ appears to require different techniques.

\newpage
{\bf Acknowledgments.}
I would like to thank Ilija Bogunovic for his helpful comments and suggestions.  This work was supported by an NUS startup grant. 

\section{Errata} \label{sec:errata}

This section has been added to the arXiv paper to correct two minor mistakes in the published ICML 2018 paper.  I am grateful to Shogo Iwazaki for pointing these out.

\subsection{Correction to the Proof of Theorem \ref{thm:ub}}

The upper bound in Theorem \ref{thm:ub} is stated using a probability bound with respect to $f$ and an expectation with respect to the noise.  However, in the current analysis of the expected regret, the confidence bound in Lemma \ref{lem:conf_bounds} is used, and such a confidence bound holds with respect to both the function and the noise.  This means that Lemma \ref{lem:conf_bounds} is applied in a part of the analysis where we are only meant to be studying the randomness of the noise.  Specifically, the current analysis averages over the event $\Bc$ in \eqref{eq:avg_R_1}--\eqref{eq:avg_R_3} and this event inadvertently includes randomness in $f$.

However, it is stated following \eqref{eq:epoch_length} that, after repeatedly sampling a particular point, ``we performed enough repetitions to attain a variance of at most $\frac{\eta_{(i)}^2}{4\beta_T}$ based on those samples alone'', i.e., any possible variance reduction from querying other points is ignored.  As a result, we may avoid Lemma \ref{lem:conf_bounds} altogether and instead use a simpler Gaussian concentration inequality with respect to the noise alone.\footnote{At this point in the analysis, we can treat $f$ as being fixed, i.e., we have already conditioned on any fixed function satisfying Assumptions \ref{as:kernel_diff} and \ref{as:taylor}.}  Specifically, if $Z \sim N(\mu,\sigma^2)$ and we let $\widehat{\mu}_K$ be the empirical mean of $K$ independent observations of $Z$, then it holds with probability at least $1-\delta_0$ that
\begin{equation}
    |\mu - \widehat{\mu}_K| \le \sigma \sqrt{ \frac{2\log\frac{1}{\delta_0}}{K}}. \label{eq:new_conf}
\end{equation}
We can apply this result (separately) for each repeatedly-sampled point in the algorithm, with $K = K_{(i)}$ as specified in Line 8 of Algorithm \ref{alg:full}.  We need to set $\delta_0$ to be small enough to permit a union bound over $|\mathcal{L}_{(i)}|$ points and all epochs $i=1,2,\dotsc$, and a similar argument to \eqref{eq:beta} reveals that $\delta_0 = \frac{\delta}{2c_1 T^3}$ suffices, thus giving the same confidence width as the original analysis.  In Line 10 of Algorithm \ref{alg:full}, we can use the empirical mean of the $K^{(i)}$ responses (to querying $x'$) in place of the GP posterior mean, in accordance with \eqref{eq:new_conf}, and similarly replace all occurrences of $\mu_{t-1}(\cdot)$ in the analysis.  Apart from this, the analysis is unchanged and the same final result is obtained.

In summary, Theorem \ref{thm:ub} still holds as stated, but the algorithm should use a separate confidence bound for each repeatedly-sampled point based on simple Gaussian concentration, rather than using GP-based confidence bounds.  Note that for the ``purely high-probability'' version discussed after the statement of Theorem \ref{thm:ub}, this change is no longer necessary (i.e., it can be done but doesn't need to be).

\subsection{Correction to the Statement of Theorem \ref{thm:lb}}

Theorem \ref{thm:lb} is stated as holding with probability at least $1-\delta_1-\delta'_2$ with respect to $f$, and as averaging the regret with respect to only the noise.  However, the proof is based on interpreting $f$ as first drawing a function $f_0$ and then shifting it in a random direction $V \in \{+,-\}$, and in the current analysis, the cumulative regret also averages over $V$ (not just the noise).  Thus, the actual result proved is that in which the probability is with respect to $f_0$ and the expectation is with respect to both $V$ and the noise.  Such a result is somewhat unnatural, so here we explain how to instead obtain a slight variant of Theorem \ref{thm:lb} that doesn't change which random variables are involved in the probability and the expectation.

The idea is to decompose the average regret given $f_0$ as
\begin{equation}
    \EE[R_T \,|\, f_0] = \sum_{v \in \{+,-\}} P_V(v) \EE[R_T \,|\, f_0,v], \label{eq:avg_v}
\end{equation}
where $P_V(+) = P_V(-) = \frac{1}{2}$.  In the notation of \eqref{eq:lb}, our analysis shows that $\EE[R_T \,|\, f_0] \ge C'\big( 1+ \sigma \sqrt{T} \big)$ with probability at least $1-\delta_1-\delta'_2$ with respect to $f_0$.  By \eqref{eq:avg_v}, the same lower bound must apply to at least one of the two values of $\EE[R_T \,|\, f_0,v]$.  But $(f_0,v)$ collectively determine $f$, and whichever $v$ value satisfies the desired lower bound, it has probability $\frac{1}{2}$.  Consequently, the desired lower bound $\EE[R_T \,|\, f] \ge C'\big( 1+ \sigma \sqrt{T} \big)$ holds with probability at least $(1-\delta_1-\delta'_2)/2$ with respect to $f$.

In summary, Theorem \ref{thm:lb} holds with the probability halved from $1-\delta_1-\delta'_2$ to $(1-\delta_1-\delta'_2)/2$, and establishing this only requires a small amount of additional reasoning.

\bibliographystyle{icml2018}
\bibliography{refs,../JS_References}

\begin{thebibliography}{26}
\providecommand{\natexlab}[1]{#1}
\providecommand{\url}[1]{\texttt{#1}}
\expandafter\ifx\csname urlstyle\endcsname\relax
  \providecommand{\doi}[1]{doi: #1}\else
  \providecommand{\doi}{doi: \begingroup \urlstyle{rm}\Url}\fi

\bibitem[Bogunovic et~al.(2016{\natexlab{a}})Bogunovic, Scarlett, and
  Cevher]{Bog16}
Bogunovic, I., Scarlett, J., and Cevher, V.
\newblock Time-varying {G}aussian process bandit optimization.
\newblock In \emph{Int. Conf. Art. Intel. Stats. (AISTATS)},
  2016{\natexlab{a}}.

\bibitem[Bogunovic et~al.(2016{\natexlab{b}})Bogunovic, Scarlett, Krause, and
  Cevher]{Bog16a}
Bogunovic, I., Scarlett, J., Krause, A., and Cevher, V.
\newblock Truncated variance reduction: A unified approach to {B}ayesian
  optimization and level-set estimation.
\newblock In \emph{Conf. Neur. Inf. Proc. Sys. (NIPS)}, 2016{\natexlab{b}}.

\bibitem[Bubeck \& Cesa-Bianchi(2012)Bubeck and Cesa-Bianchi]{Bub12}
Bubeck, S. and Cesa-Bianchi, N.
\newblock \emph{Regret Analysis of Stochastic and Nonstochastic Multi-Armed
  Bandit Problems}.
\newblock Found. Trend. Mach. Learn. Now Publishers, 2012.

\bibitem[Bull(2011)]{Bul11}
Bull, A.~D.
\newblock Convergence rates of efficient global optimization algorithms.
\newblock \emph{J. Mach. Learn. Res.}, 12\penalty0 (Oct.):\penalty0 2879--2904,
  2011.

\bibitem[Contal et~al.(2013)Contal, Buffoni, Robicquet, and Vayatis]{Con13}
Contal, E., Buffoni, D., Robicquet, A., and Vayatis, N.
\newblock \emph{Machine Learning and Knowledge Discovery in Databases}, chapter
  Parallel {G}aussian Process Optimization with Upper Confidence Bound and Pure
  Exploration, pp.\  225--240.
\newblock Springer Berlin Heidelberg, 2013.

\bibitem[Cover \& Thomas(2001)Cover and Thomas]{Cov01}
Cover, T.~M. and Thomas, J.~A.
\newblock \emph{Elements of Information Theory}.
\newblock John Wiley \& Sons, Inc., 2001.

\bibitem[{de Freitas} et~al.(2012){de Freitas}, Zoghi, and Smola]{Fre12}
{de Freitas}, N., Zoghi, M., and Smola, A.~J.
\newblock Exponential regret bounds for {G}aussian process bandits with
  deterministic observations.
\newblock In \emph{Int. Conf. Mach. Learn. (ICML)}, 2012.

\bibitem[Desautels et~al.(2014)Desautels, Krause, and Burdick]{Des14a}
Desautels, T., Krause, A., and Burdick, J.~W.
\newblock Parallelizing exploration-exploitation tradeoffs in {G}aussian
  process bandit optimization.
\newblock \emph{J. Mach. Learn. Res.}, 15\penalty0 (1):\penalty0 3873--3923,
  2014.

\bibitem[Gr{\"u}new{\"a}lder et~al.(2010)Gr{\"u}new{\"a}lder, Audibert, Opper,
  and Shawe-Taylor]{Gru10}
Gr{\"u}new{\"a}lder, S., Audibert, J.-Y., Opper, M., and Shawe-Taylor, J.
\newblock Regret bounds for {G}aussian process bandit problems.
\newblock In \emph{Int. Conf. Art. Intel. Stats. (AISTATS)}, pp.\  273--280,
  2010.

\bibitem[Hennig \& Schuler(2012)Hennig and Schuler]{Hen12}
Hennig, P. and Schuler, C.~J.
\newblock Entropy search for information-efficient global optimization.
\newblock \emph{J. Mach. Learn. Research}, 13\penalty0 (1):\penalty0
  1809--1837, 2012.

\bibitem[Hern{\'a}ndez-Lobato et~al.(2014)Hern{\'a}ndez-Lobato, Hoffman, and
  Ghahramani]{Her14}
Hern{\'a}ndez-Lobato, J.~M., Hoffman, M.~W., and Ghahramani, Z.
\newblock Predictive entropy search for efficient global optimization of
  black-box functions.
\newblock In \emph{Adv. Neur. Inf. Proc. Sys. (NIPS)}, pp.\  918--926, 2014.

\bibitem[Kandasamy et~al.(2015)Kandasamy, Schneider, and P{\'o}czos]{Kan15}
Kandasamy, K., Schneider, J., and P{\'o}czos, B.
\newblock High dimensional {B}ayesian optimisation and bandits via additive
  models.
\newblock In \emph{Int. Conf. Mach. Learn.}, 2015.

\bibitem[Kawaguchi et~al.(2015)Kawaguchi, Kaelbling, and
  Lozano-P{\'e}rez]{Kaw15}
Kawaguchi, K., Kaelbling, L.~P., and Lozano-P{\'e}rez, T.
\newblock Bayesian optimization with exponential convergence.
\newblock In \emph{Conf. Neur. Inf. Proc. Sys. (NIPS)}, 2015.

\bibitem[Kleinberg et~al.(2008)Kleinberg, Slivkins, and Upfal]{Kle08}
Kleinberg, R., Slivkins, A., and Upfal, E.
\newblock Multi-armed bandits in metric spaces.
\newblock In \emph{Proc. ACM Symp. Theory Comp. (STOC)}, pp.\  681--690, 2008.

\bibitem[Krause \& Ong(2011)Krause and Ong]{Kra11}
Krause, A. and Ong, C.~S.
\newblock Contextual {G}aussian process bandit optimization.
\newblock In \emph{Conf. Neur. Inf. Proc. Sys. (NIPS)}, pp.\  2447--2455.
  Curran Associates, Inc., 2011.

\bibitem[Raginsky \& Rakhlin(2011)Raginsky and Rakhlin]{Rag11}
Raginsky, M. and Rakhlin, A.
\newblock Information-based complexity, feedback and dynamics in convex
  programming.
\newblock \emph{IEEE Trans. Inf. Theory}, 57\penalty0 (10):\penalty0
  7036--7056, Oct. 2011.

\bibitem[Rasmussen(2006)]{Ras06}
Rasmussen, C.~E.
\newblock Gaussian processes for machine learning.
\newblock MIT Press, 2006.

\bibitem[Rolland et~al.(2018)Rolland, Scarlett, Bogunovic, and Cevher]{Rol18}
Rolland, P., Scarlett, J., Bogunovic, I., and Cevher, V.
\newblock High-dimensional {B}ayesian optimization via additive models with
  overlapping groups.
\newblock In \emph{Int. Conf. Art. Intel. Stats. (AISTATS)}, 2018.

\bibitem[Russo \& Van~Roy(2014)Russo and Van~Roy]{Rus14a}
Russo, D. and Van~Roy, B.
\newblock Learning to optimize via information-directed sampling.
\newblock In \emph{Conf. Neur. Inf. Proc. Sys. (NIPS)}, 2014.

\bibitem[Scarlett et~al.(2017)Scarlett, Bogunovic, and Cevher]{Sca17a}
Scarlett, J., Bogunovic, I., and Cevher, V.
\newblock Lower bounds on regret for noisy {G}aussian process bandit
  optimization.
\newblock In \emph{Conf. Learn. Theory (COLT)}. 2017.

\bibitem[Shahriari et~al.(2016)Shahriari, Swersky, Wang, Adams, and
  de~Freitas]{Sha16}
Shahriari, B., Swersky, K., Wang, Z., Adams, R.~P., and de~Freitas, N.
\newblock Taking the human out of the loop: A review of {B}ayesian
  optimization.
\newblock \emph{Proc. IEEE}, 104\penalty0 (1):\penalty0 148--175, 2016.

\bibitem[Shekhar \& Javidi(2017)Shekhar and Javidi]{She17}
Shekhar, S. and Javidi, T.
\newblock Gaussian process bandits with adaptive discretization.
\newblock http://arxiv.org/abs/1712.01447, 2017.

\bibitem[Srinivas et~al.(2010)Srinivas, Krause, Kakade, and Seeger]{Sri09}
Srinivas, N., Krause, A., Kakade, S.~M., and Seeger, M.
\newblock Gaussian process optimization in the bandit setting: No regret and
  experimental design.
\newblock In \emph{Int. Conf. Mach. Learn. (ICML)}, 2010.

\bibitem[Wang et~al.(2014)Wang, Shakibi, Jin, and de~Freitas]{Wan14f}
Wang, Z., Shakibi, B., Jin, L., and de~Freitas, N.
\newblock Bayesian multi-scale optimistic optimization.
\newblock In \emph{Int. Conf. Art. Intel. Stats. (AISTATS)}, pp.\  1005--1014,
  2014.

\bibitem[Wang et~al.(2016)Wang, Zhou, and Jegelka]{Wan16}
Wang, Z., Zhou, B., and Jegelka, S.
\newblock Optimization as estimation with {G}aussian processes in bandit
  settings.
\newblock In \emph{Int. Conf. Art. Intel. Stats. (AISTATS)}, 2016.

\bibitem[Yu(1997)]{Yu97}
Yu, B.
\newblock Assouad, {F}ano, and le {C}am.
\newblock In \emph{Festschrift for {L}ucien {L}e {C}am}, pp.\  423--435.
  Springer, 1997.

\end{thebibliography}

\newpage

\onecolumn

{\Huge \bf \centering Supplementary Material \par}

{\Large \bf \centering Tight Regret Bounds for Bayesian Optimization in One Dimension \\ (Jonathan Scarlett, ICML 2018) \par}

\appendix

\section{Doubling Trick for an Unknown Time Horizon} \label{sec:doubling}

Suppose that we have an algorithm that depends on the time horizon $T'$ and achieves $\EE[R_{T'}] \le C\sqrt{T'\log T'}$ for some $C > 0$.  We show that we can also achieve $\EE[R_T] = O\big( \sqrt{T\log T} \big)$ when $T$ is unknown.

To see this, fix an arbitrary integer $T_0 \in \big[1, \frac{T}{2} \big]$, and repeatedly run the algorithm with fixed time horizons $T_0$, $2T_0$, $4T_0$, etc., until $T$ points have been sampled.  The number of stages is no more than $\ell_{\max} = \lceil \log_2 \frac{T}{T_0} \rceil$.  Moreover, we have
\begin{align}
\EE[R_T] \le \sum_{\ell=1}^{\ell_{\max}} C\sqrt{ 2^{\ell-1}T_0 \log T } 
= C \sqrt{T_0 \log T} \sum_{\ell=0}^{\lceil \log_2 \frac{T}{T_0} \rceil-1} \sqrt{ 2^{\ell} } 
\le  C \sqrt{\log T} \cdot 4 \sqrt{T} 
\end{align}
where the first inequality uses $\log (2^{\ell-1} T_0) \le \log T$, and the last inequality uses $\sum_{\ell=0}^N 2^{\ell/2} \le 4 \cdot 2^{N/2}$.  

\begin{algorithm} 
    \caption{Full description of our algorithm, based on reducing uncertainty in epochs via repeated sampling.} \label{alg:full}
    \begin{algorithmic}[1]
        \REQUIRE Domain $D$, GP prior ($\mu_0$, $k_0$), time horizon $T$, constants $c_0,c_1,c_2,\rho_0$.
        \STATE Set discrete sub-domain $\Lc = \big(\frac{1}{c_1\cdot T}\ZZ \cap [0,1]\big) \cup \{1\}$, confidence parameter $\beta_T = 2\log(2c_1T^3)$, initial target confidence $\eta_{(0)} = c_0$, and initial potential maximizers $M_{(0)}=\Lc$.
        \STATE Initialize time index $t=1$ and epoch number $i=1$.
        \WHILE{less than $T$ samples have been taken}
        \STATE Set $\eta_{(i)} = \frac{1}{2} \eta_{(i-1)}$.
        \STATE Define the interval $$\Ic_{(i)} = \big[ \min\{x \in M_{(i-1)}\}, \max\{x \in M_{(i-1)}\} \big] \cap \Lc,$$ and its width $$w_{(i)} = \max\{x \in M_{(i-1)}\} - \min\{x \in M_{(i-1)}\}.$$
        \STATE Set the Lipschitz constant 
        \begin{equation*}
            L_{(i)} = 
            \begin{cases}
                c_1 & w_{(i)} > \rho_0 \\
                c_1 & w_{(i)} \le \rho_0 \text{ and either $0 \in \Ic_{(i)}$ or $1 \in \Ic_{(i)}$} \\
                c_2w_{(i)} & w_{(i)} \le \rho_0 \text{ and $\Ic_{(i)} \subseteq (0,1)$. }
            \end{cases}
        \end{equation*}
        \STATE Construct a subset $\Lc_{(i)} \subseteq \Ic_{(i)}$ as follows:
        \begin{itemize}
            \item Initialize $\Lc_{(i)} \leftarrow \emptyset$.
            \item Construct $\Lctil_{(i)}$ (not necessarily a subset of $\Ic_{(i)}$ or $\Lc$) containing regularly-spaced points within the interval $\big[ \min\{x \in \Ic_{(i)}\}, \max\{x \in \Ic_{(i)}\}\big]$, with spacing $\frac{\eta_{(i)}}{2L_{(i)}}$.
            \item For each $x \in \Lctil_{(i)}$, add its two nearest points in $\Ic_{(i)}$ to $\Lc_{(i)}$.
        \end{itemize}
        \STATE Sample each point in $\Lc_{(i)}$ repeatedly $K_{(i)}$ times, where
        \begin{equation*}
            K_{(i)} = \Big\lceil \frac{4\sigma^2 \beta_T}{\eta_{(i)}^2} \Big\rceil.
        \end{equation*}
        For each sample taken, increment $t \leftarrow t+1$, and terminate if $t > T$.
        \STATE Update the posterior distribution $(\mu_{t-1},\sigma_{t-1})$ according to \eqref{eq:mu_update}--\eqref{eq:sigma_update}, with $\xv_{t-1} = [x_1,\dotsc,x_{t-1}]^T$ and  $\yv_{t-1} = [y_1,\dotsc,y_{t-1}]^T$ respectively containing all the selected points and noisy samples so far.
        \STATE For each $x \in \Ic_{(i)}$, set 
        \begin{align*}
            \UCB_t( x ) = \mu_{t-1}(x') + \eta_{(i)}, \quad \LCB_{t}( x ) = \mu_{t-1}(x') - \eta_{(i)},
        \end{align*}
        where $x' = \argmin_{x' \in \Lc_{(i)}} |x - x'|$.
        \STATE Update the set of potential maximizers:
        \begin{equation*}
        \hspace*{-4ex} M_{(i)} = \Big\{ x \in M_{(i-1)} \,:\, \UCB_t(x) \ge \max_{x' \in M_{(i-1)}} \LCB_t(x') \Big\}.
        \end{equation*}
        \STATE Increment $i$.
        \ENDWHILE
    \end{algorithmic}
\end{algorithm}


\vspace*{-2ex}
\section{Proof of Theorem \ref{thm:ub} (Upper Bound)} \label{sec:pf_ub}

We continue from the auxiliary results given in Section \ref{sec:ub}, proceeding in several steps.  Algorithm \ref{alg:full} gives a full description of the algorithm; the reader is encouraged to refer to this throughout the proof, rather than trying to understand all the steps therein immediately.   Note that the constants $c_0$, $c_1$, $c_2$, and $\rho_0$ used in the algorithm come from Assumptions \ref{as:kernel_diff} and \ref{as:taylor}.


{\bf Reduction to a finite domain.} Our algorithm only samples $f$ within a finite set $\Lc \subseteq D$ of pre-defined points.  We choose these points to be regularly spaced, and close enough to ensure that the highest function value is within $\frac{1}{T}$ of the maximum $f(x^*)$.  Under condition \eqref{eq:c0} in Assumption \ref{as:kernel_diff} (which implies that $f$ is $c_1$-Lipschitz continuous), it suffices to choose
\begin{equation}
\Lc = \bigg(\frac{1}{c_1\cdot T}\ZZ \cap [0,1]\bigg) \cup \{1\}, \label{eq:setL}
\end{equation}
where $\ZZ$ denotes the integers.  Here we add $x=1$ to $\Lc$ because it will be notationally convenient to ensure that the endpoints $\{0,1\}$ are both included in the set.  Note that $\Lc$ satisfies $|\Lc| \le c_1 T + 1$, which we crudely upper bound by $|\Lc| \le 2c_1 T$.

Since $\max_{x \in \Lc} f(x) \ge \max_{x \in D} f(x) - \frac{1}{T}$, the cumulative regret $R_{T}^{(\Lc)}$ with respect to the best point in $\Lc$ is such that
\begin{equation}
R_T \le R_T^{(\Lc)} + 1. \label{eq:R_vs_RL}
\end{equation}
Hence, it suffices to bound $R_T^{(\Lc)}$ instead of $R_T$.  For convenience, we henceforth let $x_{\Lc}^*$ denote an arbitrary input that achieves $\max_{x \in \Lc} f(x)$, and we define the {\em instant regret} as
\begin{equation}
r(x) = f(x^*) - f(x), \quad r_t = r(x_t) = f(x^*) - f(x_t), \quad r_t^{(\Lc)} = f(x^*_{\Lc}) - f(x_t). \label{eq:simple_regret}
\end{equation}

{\bf Conditioning on high-probability events.} By assumption, the events in Assumptions \ref{as:kernel_diff} and \ref{as:taylor} simultaneously hold with probability at least $1-\delta_1-\delta_2$.  Moreover, by setting $\delta = \frac{1}{T}$ in Lemma \ref{lem:conf_bounds} and letting $\Lc$ be as in \eqref{eq:setL} with $|\Lc| \le 2c_1 T$, we deduce that \eqref{eq:conf_bounds} holds with probability at least $1 - \frac{1}{T}$ when\footnote{{\bf Correction:} Lemma \ref{lem:conf_bounds} is no longer used in a corrected version of the proof of Theorem \ref{thm:ub}; see Section \ref{sec:errata} for details.} 
\begin{equation}
\beta_T = 2\log\big(2 c_1 T^3\big). \label{eq:beta}
\end{equation}
Denoting the intersection of all events in Assumptions \ref{as:kernel_diff} and \ref{as:taylor} by $\Ac$, and the event in Lemma \ref{lem:conf_bounds} by $\Bc$, we can write the average regret given $\Ac$ as follows:
\begin{align}
\EE[R_T | \Ac] 
&= \EE[R_T | \Ac, \Bc] \cdot \PP[\Bc | \Ac] + \EE[R_T | \Ac, \Bc^c] \cdot \PP[\Bc^c | \Ac] \label{eq:avg_R_1} \\
&\le \EE[R_T | \Ac, \Bc] + \EE[R_T | \Ac, \Bc^c] \frac{1}{T(1-\delta_1-\delta_2)} \label{eq:avg_R_2} \\
&\le \EE[R_T | \Ac, \Bc] + \frac{2c_0}{1-\delta_1-\delta_2},  \label{eq:avg_R_3}
\end{align}
where \eqref{eq:avg_R_2} follows since $\PP[\Bc | \Ac] \le 1$ and $\PP[\Bc^c | \Ac] \le \frac{\PP[\Bc^c]}{\PP[\Ac]} \le \frac{1}{T(1-\delta_1-\delta_2)}$, and \eqref{eq:avg_R_3} follows since condition \eqref{eq:c0} in Assumption \ref{as:kernel_diff} ensures that $R_T \le T \cdot 2c_0$.  By \eqref{eq:avg_R_3}, in order to prove Theorem \ref{thm:ub}, it suffices to show that $R_T = O(\sqrt{T\log T})$ whenever the conditions of Assumptions \ref{as:kernel_diff}--\ref{as:taylor} and Lemma \ref{lem:conf_bounds} hold true.  We henceforth condition on this being the case.


{\bf Sampling mechanism.} Recall that $\eta_{(i)}$ represents the target confidence to attain by the end of the $i$-th epoch, and each such value is half of the previous value.  For this interpretation to be valid, $\eta_{(0)}$ should be sufficient large so that the entire function is {\em a priori} known up to confidence $\eta_{(0)}$; by \eqref{eq:c0} in Assumption \ref{as:kernel_diff}, the choice $\eta_{(0)} = c_0$ certainly suffices for this purpose.

In the $i$-th epoch, we repeatedly sample a {\em sufficiently fine} subset of $\Lc$ {\em sufficiently many times} to attain an overall confidence of $\eta_{(i)}$ within $M_{(i-1)}$ (with $M_{(0)} = \Lc$).  Specifically:
\begin{itemize}
    \item We sample each point $K_{(i)}$ times and average the resulting observations, yielding an {\em effective noise variance} of $\frac{\sigma^2}{K_{(i)}}$, and we choose $K_{(i)}$ large enough so that $\frac{\sigma^2}{K_{(i)}} \le \frac{\eta^2_{(i)}}{4\beta_T}$.  Hence, $K_{(i)} = \lceil \frac{4\sigma^2 \beta_T}{\eta_{(i)}^2} \rceil$ is sufficient.
    \item To design $\Lc_{(i)} \subseteq \Lc$, we consider the interval
    \begin{equation}
    \Ic_{(i)} = \big[ \min\{x \in M_{(i-1)}\}, \max\{x \in M_{(i-1)}\} \big] \cap \Lc, \label{eq:setI}
    \end{equation}
    which is the smallest interval (intersected with $\Lc$) containing $M_{(i-1)}$.  We select a Lipschitz constant $L_{(i)}$ (to be specified later) such that $f$ is $L_{(i)}$-Lipschitz within $\Ic_{(i)}$, and then we choose $\Lc_{(i)} \subseteq \Ic_{(i)}$ to ensure the following:
    \begin{equation}
    \text{Each $x \in \Ic_{(i)}$ is within a distance $\frac{\eta_{(i)}}{2L_{(i)}}$ of the nearest $x' \in \Lc_{(i)}$}. \label{eq:cond}
    \end{equation}
    If we were sampling at arbitrary locations, it would suffice to choose $\big\lceil\frac{2w_{(i)} L_{(i)} }{ \eta_{(i)} }\big\rceil$ equally-spaced points, where
    \begin{align}
    w_{(i)} = \max\{x \in M_{(i-1)}\} - \min\{x \in M_{(i-1)}\} \label{eq:wi}
    \end{align}
    is the width of the interval.  With the restriction of sampling within the fine discretization $\Lc$, we can simply ``round'' to the two nearest points,\footnote{To give a concrete example, suppose that $\Lc = \{0,0.01,\dotsc,0.99,1\}$, and that we seek a set of points such that each $x \in \Lc$ is within a distance $\frac{1}{3}$ of the nearest one.  Without constraints, the points $\big\{ \frac{1}{3}, \frac{2}{3} \big\}$ would suffice, but after rounding these to $\{0.33, 0.66\}$, the point $x = 1$ is at a distance $0.34 > \frac{1}{3}$.  However, doubling up and constructing the set $\{0.33, 0.34, 0.66, 0.67\}$ clearly suffices.  } yielding a suitable set $\Lc_{(i)} \subseteq \Ic_{(i)}$ of cardinality at most $2\big\lceil\frac{2 w_{(i)} L_{(i)} }{ \eta_{(i)} }\big\rceil$
\end{itemize}
Combining these, the total number of samples $T_{(i)}$ is given by
\begin{align}
T_{(i)} &= K_{(i)} \cdot |\Lc_{(i)}| \\
& \le 2 \cdot \Big\lceil \frac{4 \sigma^2 \beta_T}{\eta_{(i)}^2} \Big\rceil \cdot \Big\lceil  \frac{2 w_{(i)} L_{(i)} }{ \eta_{(i)} }\Big\rceil . \label{eq:epoch_length}
\end{align}
At the points that were sampled, we performed enough repetitions to attain a variance of at most $\frac{\eta_{(i)}^2}{4\beta_T}$ based on those samples alone.  The information from any earlier samples only reduces the variance further, so the overall posterior variance\footnote{We consider $(\mu_{t-1},\sigma_{t-1})$ instead of $(\mu_{t},\sigma_{t})$ because when the time index is $t$, we have only selected $t-1$ points. } $\sigma_{t-1}^2(x)$ also yields $\beta_T^{1/2} \sigma_{t-1}(x) \le \frac{\eta_{(i)}}{2}$.  Hence, Lemma \ref{lem:conf_bounds} ensures that at these sampled points, we can set
\begin{align}
\UCBtil_t( x ) = \mu_{t-1}(x) + \frac{\eta_{(i)}}{2}, \quad \LCBtil_t( x ) = \mu_{t-1}(x) - \frac{\eta_{(i)}}{2}. \label{eq:ucb_lcb_1}
\end{align}
For the points in $M_{(i-1)}$ that we didn't sample, we note that the following confidence bounds are valid as long as $f$ is $L_{(i)}$-Lipschitz continuous within $\Ic_{(i)}$:
\begin{align}
\UCBtil_t( x ) &= \mu_{t-1}(x') + \frac{\eta_{(i)}}{2} + L_{(i)}|x - x'|, \label{eq:ucb_2} \\ 
\LCBtil_t( x ) &= \mu_{t-1}(x') - \frac{\eta_{(i)}}{2} - L_{(i)}|x - x'|, \label{eq:lcb_2} 
\end{align}
where $x' = \argmin_{x' \in \Lc_{(i)}} |x - x'|$ is the closest sampled point to $x$.  If $x$ is itself in $\Lc_{(i)}$, these expressions reduce to \eqref{eq:ucb_lcb_1}.

Now, since we have ensured the condition \eqref{eq:cond}, we find that we can weaken \eqref{eq:ucb_2}--\eqref{eq:lcb_2} to
\begin{align}
\UCB_t( x ) = \mu_{t-1}(x') + \eta_{(i)}, \quad \LCB_t( x ) = \mu_{t-1}(x') - \eta_{(i)}. \label{eq:ucb_lcb_3}
\end{align}
That is, as long as the Lipschitz constant $L_{(i)}$ is valid, we have $\eta_{(i)}$-confidence at the end of the $i$-th epoch.  As a result, by Lemma \ref{lem:elimination}, the updated set of potential maximizers
\begin{equation}
M_{(i)} = \bigg\{ x \in M_{(i-1)} \,:\, \UCB_t( x ) \ge \max_{x' \in \Lc} \LCB_t( x ) \bigg\},
\end{equation}
with $t$ being the ending time of the epoch, must only contain points within $\Lc$ whose function value is within $4\eta_{(i)}$ of $f(x_{\Lc}^*)$.  Below, we will choose $L_{(i)}$ differently in different epochs, while still ensuring the required Lipschitz condition is valid.

{\bf Analysis of early epochs.} Recall the following:
\begin{itemize}
    \item By Assumption \ref{as:kernel_basic}, the constant $\epsilon$ lower bounds the separation between $f(x^*)$ and the function value at the second highest local maximum (if any). 
    \item By Assumption \ref{as:taylor}, we either have $x^*$ at an endpoint and the locally linear behavior \eqref{eq:linear_opt}, or we have $x^* \in (\rho_0,1-\rho_0)$ and the locally quadratic behavior \eqref{eq:Taylor_opt}.
\end{itemize}
In the epochs for which $w^{(i)} > \rho_0$, we choose $L_{(i)} = c_1$ ({\em cf.}, \eqref{eq:c0}), which is clearly a valid Lipschitz constant.  We claim that {\em after a finite number of epochs}, all points $x \in M_{(i)}$ satisfy $f(x) > f(x^*) - \epsilon$ and $|x - x^*| \le \frac{\rho_0}{2}$, and therefore, $w_{(i)}$ ceases to be greater than $\rho_0$.  We henceforth distinguish between the two cases using the terminology {\em early epochs} and {\em late epochs}.

To see that the preceding claim is true, we consider the two cases of Assumption \ref{as:taylor}:
\begin{itemize}
    \item In the first case, all points satisfying $|x - x^*| > \rho_0$ are at least $\min\{\cunder_1 \rho_0, \epsilon\}$-suboptimal by the locally linear behavior \eqref{eq:linear_opt} and the $\epsilon$ gap \eqref{eq:eta};
    \item In the second case, all points satisfying $|x - x^*| > \rho_0$ are at least $\min\{\cunder_2\rho_0^2,\epsilon\}$-suboptimal by the locally quadratic behavior \eqref{eq:linear_opt} and the $\epsilon$ gap \eqref{eq:eta}.
\end{itemize}
Hence, in either case, all points satisfying $|x - x^*| > \rho_0$ are at least $\epsilon'$-suboptimal, where $\epsilon' = \min\{\cunder_1 \rho_0, \cunder_2\rho_0^2,\epsilon\}$.  As a result, to establish the desired claim, we only need to show that $M_{(i)}$ contains no points with instant regret $r(x) \ge \epsilon'$.

Since $f(x_{\Lc}^*) \ge f(x^*) - \frac{1}{T}$ (as stated following \eqref{eq:setL}), we find that as long as $T > \frac{2}{\epsilon'}$,\footnote{It is safe to assume that $T$ is sufficiently large, since the smaller values of $T$ can be handled by increasing $C$ in the theorem statement.} it suffices that $M_{(i)}$ only contains points such that $r_t^{(\Lc)}(x) \le \frac{\epsilon'}{2}$.  By Lemma \ref{lem:elimination}, this happens as soon as $\eta_{(i)} < \frac{\epsilon'}{8}$.  Since $\epsilon'$ is constant and we halve $\eta_{(i)}$ at the end of each epoch, it must be that only a finite number of epochs $i_{\max,1}$ pass before this occurs, with $i_{\max,1}$ depending only on $\eta_{(0)}$ and $\epsilon'$.

For these early epochs, we simply upper bound $w_{(i)}$ in \eqref{eq:epoch_length} by one, meaning their overall cumulative time $\Tearly$ satisfies
\begin{equation}
\Tearly \le \sum_{i=1}^{i_{\max,1}} T_{(i)} \le 2 i_{\max,1} \Big\lceil \frac{256 \sigma^2 \beta_T}{(\epsilon')^2} \Big\rceil \cdot \Big\lceil \frac{16 c_1 }{ \epsilon' } \Big\rceil, \label{eq:Tearly}
\end{equation}
where we have used the fact that $\eta_{(i)} \ge \frac{\epsilon'}{8}$ and $L_{(i)} = c_1$ in these epochs. 

{\bf Analysis of late epochs.} Recall that we consider ourselves in a late epoch as soon as $w_{(i)} \le \rho_0$.  This condition implies that all points in $M_{(i-1)}$ are within a distance $\rho_0$ of $x^*$,\footnote{Since we condition on the confidence bounds in Lemma \ref{lem:conf_bounds} being valid, only points that are truly suboptimal are ever ruled out.} yielding the locally linear behavior \eqref{eq:linear_opt} if $x^*$ is an endpoint, and the locally quadratic behavior \eqref{eq:Taylor_opt} otherwise.  Moreover, Assumption \ref{as:taylor} assumes $x^* \in (\rho_0,1-\rho_0)$ in the latter case, and as a result, the algorithm can identify which case has occurred: If $\Ic_{(i)}$ contains an endpoint, then we are in the first case, whereas if $\Ic_{(i)} \subseteq (0,1)$, then we are in the second case.  

Accordingly, the algorithm can choose the Lipschitz constant $L_{(i)}$ differently in the two cases.  In the first case, we simply continue to use the global choice $L_{(i)} = c_1$ from \eqref{eq:c0}.  In the second case, we observe that $f'(x^*) = 0$, and recall from \eqref{eq:c0} that $f'$ is $c_2$-Lipschitz continuous.  Since the width of the interval of interest $\Ic_{(i)}$ is $w_{(i)}$, we conclude that $|f'(x)| \le c_2 w_{(i)}$ within $\Ic_{(i)}$, and accordingly, we can set 
\begin{equation}
L_{(i)} = c_2 w_{(i)}. \label{eq:L_choice}
\end{equation}
We initially focus on this second case (which is the more interesting of the two), and later return to the first case.

Recall that within the $i$-th epoch, all points with $f(x) < f(x^*_{\Lc}) - 4\eta_{(i-1)}$ have already been removed from the potential maximizers ({\em cf.}, Lemma \ref{lem:elimination}).  This implies that the points sampled incur instant regret at most
\begin{equation}
r_t^{(\Lc)} \le 4\eta_{(i-1)}, \label{eq:r_later}
\end{equation}
and hence, since we have established that $f(x_{\Lc}^*) \ge f(x^*) - \frac{1}{T}$,
\begin{equation}
r_t \le 4\eta_{(i-1)} + \frac{1}{T}. \label{eq:r_later_2}
\end{equation}
From this fact and the locally quadratic behavior \eqref{eq:Taylor_opt}, we deduce that the width $w_{(i)}$ defined in \eqref{eq:wi} satisfies $w_{(i)} \le \sqrt{\frac{4\eta_{(i-1)} + \frac{1}{T}}{\cunder_2}} = \sqrt{\frac{8\eta_{(i)} + \frac{1}{T}}{\cunder_2}}$ (since $\eta_{(i-1)} = 2\eta_{(i)}$), from which \eqref{eq:epoch_length} and \eqref{eq:L_choice} yield
\begin{align}
T_{(i)} \le 2 \Big\lceil \frac{4 \sigma^2 \beta_T}{\eta_{(i)}^2} \Big\rceil \cdot \Big\lceil \frac{2 c_2}{ \cunder_2 }\cdot \Big( 8 + \frac{1}{T\eta_{(i)}} \Big) \Big\rceil. \label{eq:epoch_length_later}
\end{align} 
Grouping all the constants together and writing $\lceil z \rceil \le 1+z$, we can simplify this to
\begin{equation}
T_{(i)} \le c' \bigg( 1 + \frac{1}{T\eta_{(i)}} + \frac{ \sigma^2 \beta_T }{\eta_{(i)}^2} + \frac{ \sigma^2 \beta_T }{T \eta_{(i)}^3} \bigg) \label{eq:Ti_bound_later}
\end{equation}
for suitably-chosen $c' > 0$.

{\bf Bounding the cumulative regret.} In the early epochs, we crudely upper bound the regret at each time instant by $2c_0$ ({\em cf.}, \eqref{eq:c0}).  Hence, since the total cumulative time of these epochs satisfies \eqref{eq:Tearly} for bounded $i_{\max,1}$, and $\beta_T = O( \log T )$ as per \eqref{eq:beta}, the corresponding total cumulative regret $\Rearly^{(\Lc)}$ is upper bounded by 
\begin{equation}
\Rearly^{(\Lc)} \le c'' (1 + \sigma^2 \log T) \label{eq:R1}
\end{equation} 
for some $c'' > 0$.

For the late epochs, we make use of the instant regret bound in \eqref{eq:r_later}, depending on the epoch index $i$.  Since this upper bound is decreasing in $i$, and the epoch lengths satisfy \eqref{eq:Ti_bound_later}, we can upper bound $R_T^{(\Lc)}$ by considering the hypothetical case that the epoch lengths are {\em exactly} the right-hand side of \eqref{eq:Ti_bound_later}, and the instant regret incurred at time $t$ is {\em exactly} $r_t^{(\Lc)} = 4\eta_{(i-1)}$.

In this situation, we can easily upper bound the total number of epochs: The last epoch must certainly be no larger than $i_{\max,2}$, defined to be the smallest $i$ such that the term $c'\frac{ \sigma^2 \beta_T }{\eta_{(i)}^2}$ on the right-hand side of \eqref{eq:Ti_bound_later} is $T$ or higher.  Substituting $\eta_{(i)} = \frac{\eta_{(0)}}{2^{i}}$ and re-arranging, we conclude that
\begin{equation}
    i_{\max,2} \le \log_4 \frac{T \eta_{(0)}^2}{ c' \sigma^2 \beta_T } = \log_2 \sqrt{\frac{T \eta_{(0)}^2}{ c' \sigma^2 \beta_T }}. \label{eq:imax_bound}
\end{equation}
For technical reasons, here and subsequently we can assume without loss of generality that $\sigma \le \kappa \sqrt{\frac{T}{\log T}}$ for arbitrarily small $\kappa > 0$ and sufficiently large $T$; otherwise, Theorem \ref{thm:ub} states the trivial bound $\EE[R_T] \le CT$.  Since $\beta_T = \Theta(\log T)$, this technical condition means the right-hand side of \eqref{eq:imax_bound} exceeds one.

Continuing, the total cumulative regret $\Rlate^{(\Lc)}$ from the late epochs is upper bounded as follows:
\begin{align}
\Rlate^{(\Lc)} 
&\le \sum_{i=1}^{i_{\max,2}} 4\eta_{(i-1)} T^{(i)} \label{eq:r2_bound1} \\
&\le 4c' \sum_{i=1}^{i_{\max,2}} \eta_{(i-1)} + 8c'\bigg( \sum_{i=1}^{i_{\max,2}} 1\bigg) +  8c' \sigma^2 \beta_T \sum_{i=1}^{i_{\max,2}}  \frac{1}{\eta_{(i)}} + \frac{8c' \sigma^2 \beta_T}{T} \sum_{i=1}^{i_{\max,2}}  \frac{1}{\eta_{(i)}^2} \label{eq:r2_bound2} \\
&\le 4c' i_{\max,2}(\eta_{(0)} + 2) +  8c' \sigma^2 \beta_T \sum_{i=1}^{i_{\max,2}}  \frac{1}{\eta_{(i)}} + \frac{8c' \sigma^2 \beta_T}{T} \sum_{i=1}^{i_{\max,2}}  \frac{1}{\eta_{(i)}^2} \label{eq:r2_bound2a} \\
&\le 4c' i_{\max,2}(\eta_{(0)} + 2) + \frac{8c' \sigma^2 \beta_T}{\eta_{(0)}} \sum_{i=1}^{i_{\max,2}}  2^i +  \frac{8c' \sigma^2 \beta_T}{T\eta_{(0)}^2} \sum_{i=1}^{i_{\max,2}}  4^i \label{eq:r2_bound3} \\
&\le 4c' i_{\max,2}(\eta_{(0)} + 2) + \frac{16c' \sigma^2 \beta_T}{\eta_{(0)}} 2^{i_{\max,2}} + \frac{16 c' \sigma^2 \beta_T}{T\eta_{(0)}^2} 4^{i_{\max,2}} \label{eq:r2_bound4} \\
&\le 4c' (\eta_{(0)} + 2) \log_4 \frac{T \eta_{(0)}^2}{ c' \sigma^2 \beta_T } + 16 \sqrt{ c' \sigma^2 \beta_T T} + 16,\label{eq:r2_bound5} 
\end{align}
where \eqref{eq:r2_bound2} follows from \eqref{eq:Ti_bound_later} and the fact that $\eta_{(i-1)} = 2\eta_{(i)}$, \eqref{eq:r2_bound2a} follows since $\eta_{(i-1)} \le \eta_{(0)}$, \eqref{eq:r2_bound3} follows since $\eta_{(i)} = \frac{\eta_{(0)}}{2^i}$, \eqref{eq:r2_bound4} follows since $\sum_{i=1}^N 2^i \le 2\cdot 2^N$ and $\sum_{i=1}^N 4^i \le 2\cdot 4^N$, and \eqref{eq:r2_bound5} follows by substituting the upper bound on $i_{\max,2}$ from \eqref{eq:imax_bound}.  Using the fact that $\beta_T = O(\log T)$, and recalling that $\eta_{(0)} = c_0$ is constant, we simplify \eqref{eq:r2_bound5} to
\begin{equation}
\Rlate^{(\Lc)} \le c^{\dagger} \Big( 1 + \sigma \sqrt{T \log T} \Big) \label{eq:r2_bound6}
\end{equation}
for some $c^{\dagger} > 0$.  Note that we can safely drop the $O\big(\log \frac{T}{\sigma^2 \beta_T}\big) = O\big(\log \frac{T}{\sigma^2 \log T}\big)$ term in \eqref{eq:r2_bound5} due to the assumption $\sigma^2 \ge \frac{c_{\sigma}}{T^{1-\zeta}}$ in Theorem \ref{thm:ub}.

{\bf Handling the first case in Assumption \ref{as:taylor}.} From \eqref{eq:L_choice} onwards, we focused only on the second case of Assumption \ref{as:taylor}.  In the first case, we have a worse Lipschitz constant $L_{(i)} = c_1$, but the width also shrinks faster: By the locally linear behavior \eqref{eq:linear_opt}, achieving $\eta_{(i)}$-confidence not only brings the interval width $w_{(i)}$ down to at most $O(\sqrt{\eta_{(i)}})$, but also further down to $O(\eta_{(i)})$.  Hence, we lose a factor of $\sqrt{\eta_{(i)}}$ in the Lipschitz constant, but we gain a factor of $\sqrt{\eta_{(i)}}$ in the upper bound on $w_{(i)}$.  Since the number of points sampled in \eqref{eq:epoch_length} contains the product of the two, the final result remains unchanged, i.e., we still have \eqref{eq:r2_bound6}, possibly with a different constant $c^{\dagger}$.

{\bf Completion of the proof.} Combining \eqref{eq:R_vs_RL}, \eqref{eq:avg_R_3}, \eqref{eq:R1} and \eqref{eq:r2_bound6}, we obtain 
\begin{equation}
    \EE[R_T] \le C^{\dagger} \big( 1 + \sigma^2 \log T + \sigma \sqrt{T \log T}\big) \label{eq:ub2}
\end{equation}
for some constant $C^{\dagger}$.   As stated following \eqref{eq:imax_bound}, we can assume without loss of generality that $\sigma \le  O\big(\sqrt{\frac{T}{\log T}}\big)$, which means that the third term of \eqref{eq:ub2} dominates the second, and the proof is compete.


\section{Proof of Lemma \ref{lem:r_properties}} \label{sec:pf_r_properties}

For the first part, we consider $\Delta$ sufficiently small so that $\cunder_2 \Delta^2 < \epsilon$, for $\epsilon$ given in Assumption \ref{as:kernel_diff} and $\cunder_2$ in Assumption \ref{as:endpoints}.  Since all local maxima are at least $\epsilon$-suboptimal, achieving $r_+(x) < \cunder_2 \Delta^2$ requires that $x$ lies within a small interval around $x_+^*$.  Moreover, the locally quadratic behavior \eqref{eq:Taylor_opt2} in Assumption \ref{as:endpoints} yields $r_+(x) \ge \cunder_2 (x-x_+^*)^2$ within this interval when $\Delta$ is sufficiently small.  Combining this with $r_+(x) < \cunder_2 \Delta^2$ gives $|x - x_+^*| < \Delta$, and since $|x^*_+ - x^*_-| = 2\Delta$, the triangle inequality yields $|x - x_-^*| > \Delta$.  Again using \eqref{eq:Taylor_opt2}, we conclude that $r_-(x) > \cunder_2 \Delta^2$, as required.

For the second part, we recall from \eqref{eq:f+}--\eqref{eq:f-} that $r_+(x) = r_0(x+\Delta)$ and $r_-(x) = r_0(x-\Delta)$, where $r_0(x) = f_0(x_0^*) - f_0(x)$.  Again assuming $\Delta$ is sufficiently small (i.e., less than $\rho_0$), we can apply the general Taylor expansion according to \eqref{eq:Taylor_general} to obtain
\begin{gather}
|r_+(x) - r_0(x)| \le \Delta |r'_0(x)| + c_{2,\max} \Delta^2, \label{eq:abs1} \\
|r_-(x) - r_0(x)| \le \Delta |r'_0(x)| + c_{2,\max} \Delta^2, \label{eq:abs2} 
\end{gather}
where $c_{2,\max} = \max\{ |\cunder'_2|, |\cbar'_2| \}$.  Since $r'_0(x)$ is $c_2$-Lipschitz continuous (see \eqref{eq:c0}) and equals zero at $x_0^*$, we must have $|r'_0(x)| \le c_2 |x - x_0^*|$.  Hence, and using the triangle inequality along with \eqref{eq:abs1}--\eqref{eq:abs2}, we have
\begin{equation}
|r_+(x) - r_-(x)| \le 2 \Delta c_2 |x - x_0^*| + 2c_{2,\max} \Delta^2,  
\end{equation}
which proves \eqref{eq:prop2}.

For the third part, we note that since $x_+^* = x_0^* - \Delta$ and $x_-^* = x_0^* + \Delta$, the conditions in \eqref{eq:quadratic_regret} can be written as
\begin{align}
    r_+(x) \ge c'' ( x - x_+^* )^2, \qquad r_-(x) \ge c'' ( x - x_-^* )^2.
\end{align}
Using the locally quadratic behavior in \eqref{eq:Taylor_opt2}, we deduce that \eqref{eq:quadratic_regret} holds for all $x$ within distance $\rho_0$ of the respective function optimizer.  On the other hand, if the distance from the optimizer is more than $\rho_0$, then a combination of \eqref{eq:eta} and \eqref{eq:Taylor_opt2} reveals that $r(x)$ is bounded away from zero.  Since the quadratic terms in \eqref{eq:quadratic_regret} are also bounded from above due to the fact that $x \in [0,1]$, we conclude that \eqref{eq:quadratic_regret} holds for sufficiently small $c''$.

\section{Proof of Theorem \ref{thm:lb} (Lower Bound)} \label{sec:pf_lb}

We continue from the reduction to binary hypothesis testing and auxiliary results given in Section \ref{sec:lb}.  These results hold for an arbitrary given (deterministic) BO algorithm, which in general is simply a sequence of mappings that return the next point $x_t$ based on the previous samples $y_1,\dotsc,y_{t-1}$.  Recall also that we implicitly condition on an arbitrary realization of $f_0$ satisfying the events in Assumptions \ref{as:kernel_diff} and \ref{as:endpoints}, meaning that all expectations and probabilities are only with respect to the random index $V \in \{+,-\}$ and/or the noise.  We proceed in two main steps.

{\bf Bounding the mutual information.} To bound the mutual information term $I(V;\xv,\yv)$ appearing in \eqref{eq:Fano}, we first apply the following tensorization bound for adaptive sampling, which is based on the chain rule for mutual information (e.g., see \cite{Rag11}):\footnote{This form of the bound is not stated explicitly in \cite{Rag11}.  However, Eq.~(27) of \cite{Rag11} states that $I(V;\xv,\yv) \le \sum_{t=1}^T I(V;y_t|x_1^t, y_1^{t-1})$, where $x_1^t = (x_1,\dotsc,x_t)$ and similarly for $y_1^{t-1}$.  Letting $H(\cdot)$ denote the (differential) entropy function \cite{Cov01}, we obtain \eqref{eq:chain_rule} by writing $I(V;y_t|x_1^t, y_1^{t-1}) = H(y_t|x_1^t, y_1^{t-1}) - H(y_t|x_1^t, y_1^{t-1}, V)$, applying $H(y_t|x_1^t, y_1^{t-1}) \le H(y_t | x_t)$ since conditioning reduces entropy, and applying $H(y_t|x_1^t, y_1^{t-1}, V) = H(y_t|x_t, V)$ since in our setting $y_t$ depends on $(x_1^t, y_1^{t-1}, V)$ only through $(x_t,V)$.}
\begin{equation}
I(V;\xv,\yv) \le \sum_{t=1}^T I(V;y_t|x_t). \label{eq:chain_rule}
\end{equation}
It is well known that the conditional mutual information $I(V;y_t|x_t = x)$ is upper bounded by the maximum KL divergence $\max_{v,v'} D(P_{Y|V,X}(\cdot \,|\, v,x) \| P_{Y|V,X}(\cdot \,|\, v',x))$ between the resulting conditional output distributions $P_{Y|V,X}$ (e.g., see Eq.~(31) of \cite{Rag11}).  In our setting, there are only two values of $v$, and since we are considering Gaussian noise, their conditional output distributions are $N( r_+(x), \sigma^2 )$ and $N( r_-(x), \sigma^2 )$.  Using the standard property that the KL divergence between the $N(\mu_1,\sigma^2)$ and $N(\mu_2,\sigma^2)$ density functions is $\frac{(\mu_2 - \mu_1)^2}{2\sigma^2}$, we deduce that
\begin{equation} 
I(V;y_t|x_t = x) \le \frac{(r_+(x) - r_-(x))^2}{2\sigma^2}. 
\end{equation}
Substituting property \eqref{eq:prop2} in Lemma \ref{lem:r_properties} gives 
\begin{align}
I(V;y_t|x_t = x) 
&\le \frac{(c')^2}{2\sigma^2} \big( \Delta |x - x_0^*| + \Delta^2 \big)^2 \\
&\le \frac{3(c')^2}{2\sigma^2} \big( \Delta^2 |x - x_0^*|^2 + \Delta^4 \big), \label{eq:mi_bound3}
\end{align}
where \eqref{eq:mi_bound3} follows since $(a+b)^2 \le 3(a^2 + b^2)$.  Averaging over $x_t$, we obtain $I(X;y_t|x_t) \le \frac{3(c')^2}{2\sigma^2} \big( \Delta^2 \EE\big[ |x_t - x_0^*|^2 \big] + \Delta^4 \big)$, and substitution into \eqref{eq:chain_rule} gives
\begin{align}
I(V;\xv,\yv) \le \frac{3(c')^2}{2\sigma^2} \bigg( \Delta^2 \EE\bigg[ \sum_{t=1}^T |x_t - x_0^*|^2 \bigg] + T\Delta^4 \bigg). \label{eq:MI}
\end{align}

{\bf Bounding the regret.} We consider the cases $\EE[R_T] \ge c'' T\Delta^2$ and $\EE[R_T] < c'' T\Delta^2$  separately, where $c''$ is defined in Lemma \ref{lem:r_properties}.  In the former case, we immediately have a lower bound on the average cumulative regret, whereas in the latter case, the following lemma is useful.

\begin{lem} \label{lem:contr}
    If $\EE[R_T] < c'' T\Delta^2$ with $c''$ defined in Lemma \ref{lem:r_properties}, then $\EE\big[ \sum_{t=1}^T |x_t-x_0^*|^2 \big] < 4T\Delta^2$.
\end{lem}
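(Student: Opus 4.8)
The plan is to exploit the quadratic lower bound on the instant regret from part 3 of Lemma~\ref{lem:r_properties}, combined with the elementary inequality $a^2 \le 2(a\pm\Delta)^2 + 2\Delta^2$, and then average over the two hypotheses. Recall that $V$ is equiprobable on $\{+,-\}$ and that, with $R_{T,v} = \sum_{t=1}^T r_v(x_t)$, we have $\EE[R_T] = \frac{1}{2}\big(\EE_+[R_{T,+}] + \EE_-[R_{T,-}]\big)$, and likewise $\EE\big[\sum_{t=1}^T |x_t - x_0^*|^2\big] = \frac{1}{2}\big(\EE_+\big[\sum_t |x_t - x_0^*|^2\big] + \EE_-\big[\sum_t |x_t - x_0^*|^2\big]\big)$.

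First I would handle the $+$ hypothesis. Writing $x_t - x_0^* = \big((x_t - x_0^*) + \Delta\big) - \Delta$ and using $(u-\Delta)^2 \le 2u^2 + 2\Delta^2$, I get $|x_t - x_0^*|^2 \le 2\big((x_t - x_0^*)+\Delta\big)^2 + 2\Delta^2$, and then \eqref{eq:quadratic_regret} gives $\big((x_t - x_0^*)+\Delta\big)^2 \le \frac{1}{c''} r_+(x_t)$, so that $|x_t - x_0^*|^2 \le \frac{2}{c''} r_+(x_t) + 2\Delta^2$. Summing over $t$ and taking $\EE_+$ yields $\EE_+\big[\sum_{t=1}^T |x_t - x_0^*|^2\big] \le \frac{2}{c''}\EE_+[R_{T,+}] + 2T\Delta^2$. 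The symmetric computation under the $-$ hypothesis, using $x_t - x_0^* = \big((x_t - x_0^*) - \Delta\big) + \Delta$ together with the second bound in \eqref{eq:quadratic_regret}, gives $\EE_-\big[\sum_{t=1}^T |x_t - x_0^*|^2\big] \le \frac{2}{c''}\EE_-[R_{T,-}] + 2T\Delta^2$.

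Averaging the two displays and using the identities above, I obtain $\EE\big[\sum_{t=1}^T |x_t - x_0^*|^2\big] \le \frac{2}{c''}\EE[R_T] + 2T\Delta^2$. Under the hypothesis $\EE[R_T] < c'' T\Delta^2$, the right-hand side is strictly less than $2T\Delta^2 + 2T\Delta^2 = 4T\Delta^2$, which is the claim.

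The proof is essentially bookkeeping, so there is no serious obstacle; the only point requiring care is pairing each function's quadratic regret bound with the matching hypothesis (i.e., bounding via $r_+$ under $\PP_+$ and via $r_-$ under $\PP_-$), since the bounds in \eqref{eq:quadratic_regret} center the quadratic at $x_0^* - \Delta$ and $x_0^* + \Delta$ respectively, and it is precisely this offset that the identity $a^2 \le 2(a\pm\Delta)^2 + 2\Delta^2$ absorbs.
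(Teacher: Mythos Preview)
Your proof is correct, but it follows a genuinely different and more elementary route than the paper. The paper lower bounds $\EE[R_T]$ by $c''\,\EE\big[\sum_t (x_t-x_0^*)^2 - 2|x_t-x_0^*|\Delta + \Delta^2\big]$ via a direct expansion of the square, then cancels the $T\Delta^2$ terms, applies Cauchy--Schwarz to $\sum_t |x_t-x_0^*|$, applies Jensen to pull the expectation inside the square root, and finally solves the resulting self-referential inequality $Z < 2\Delta\sqrt{T}\sqrt{Z}$ for $Z = \EE\big[\sum_t(x_t-x_0^*)^2\big]$. Your argument sidesteps all of this by using the pointwise inequality $(u-\Delta)^2 \le 2u^2 + 2\Delta^2$ (equivalently $(u+\Delta)^2 \le 2u^2 + 2\Delta^2$), which converts each $|x_t - x_0^*|^2$ directly into $\frac{2}{c''} r_v(x_t) + 2\Delta^2$ under the matching hypothesis $V=v$; averaging and summing then gives the bound in one line. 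The trade-off is that the paper's approach exhibits the general ``square both sides'' trick that reappears in related minimax arguments, whereas your approach is shorter, avoids Cauchy--Schwarz and Jensen entirely, and makes transparent exactly where the factor $4$ comes from ($2$ from the convexity inequality, $2$ from the $+2\Delta^2$ slack).
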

\begin{proof}
    Since $V$ is equiprobable on $\{+,-\}$, we have
    \begin{align}
    \EE[R_T]
    &= \sum_{v \in \{+,-\}} \EE\bigg[ \sum_{t=1}^T r_{v}(x_t) \,\Big|\, V=v\bigg] \\
    &\ge c'' \sum_{v \in \{+,-\}} \EE\bigg[\sum_{t=1}^T ( (x_t - x_0^*) + v\Delta)^2 \,\Big|\, V=v\bigg] \label{eq:contr_step2} \\
    &\ge c'' \EE\bigg[\sum_{t=1}^T (x_t-x_0^*)^2 - 2\sum_{t=1}^T |x_t - x_0^*|\Delta + T\Delta^2\bigg], \label{eq:contr_step3} 
    \end{align}
    where \eqref{eq:contr_step2} follows from \eqref{eq:quadratic_regret} in Lemma \ref{lem:r_properties}, and \eqref{eq:contr_step3} follows by expanding the square and lower bounding the cross-term by its negative absolute value.
    
    Substituting the assumption $\EE[R_T] < c'' T\Delta^2$ into \eqref{eq:contr_step3}, and canceling the term $c'' T \Delta^2$ appearing on both sides, we obtain
    \begin{align}
    \EE\bigg[\sum_{t=1}^T (x_t - x_0^*)^2\bigg] 
    &< 2\Delta\EE\bigg[\sum_{t=1}^T |x_t - x_0^*| \bigg] \label{eq:contr_step4}  \\
    &\le 2\Delta\sqrt{T} \EE\Bigg[\sqrt{\sum_{t=1}^T (x_t - x_0^*)^2} \Bigg] \label{eq:contr_step5}  \\
    &\le 2\Delta\sqrt{T} \sqrt{\EE\bigg[\sum_{t=1}^T (x_t - x_0^*)^2 \bigg]}, \label{eq:contr_step6}
    \end{align}
    where \eqref{eq:contr_step5} follows from the Cauchy-Schwartz inequality, and \eqref{eq:contr_step6} follows from Jensen's inequality.  Solving for $\EE\big[ \sum_{t=1}^T (x_t - x_0^*)^2 \big]$ yields the desired claim.
\end{proof}

In the case $\EE[R_T] < c'' T\Delta^2$, we claim that under the choice $\Delta = \big( \frac{\sigma^2}{\Ctil T} \big)^{1/4}$ with a sufficiently large constant $\Ctil$, it holds that $\EE[R_T] \ge \ctil \sigma \sqrt{T}$ for some constant $\ctil$.  Once this is established, combining the two cases with the choice of $\Delta$ gives
\begin{equation}
    \EE[R_T] \ge \min\bigg\{\frac{c''}{\sqrt{\Ctil}}, \ctil \bigg\} \sigma \sqrt{T},  \label{eq:RT_final}
\end{equation}
which yields Theorem \ref{thm:lb}. We also note that by the assumption $\sigma^2 \le c_{\sigma} T^{1-\zeta}$ in Theorem \ref{thm:lb}, we have for sufficiently large $T$ that $\Delta$ is indeed arbitrarily small under the above choice, as was assumed throughout the proof.\footnote{It is safe to assume that $T$ is sufficiently large, since the smaller values of $T$ can be handled by decreasing $C'$ in the theorem statement.}

It only remains to establish the claim stated above \eqref{eq:RT_final} when $\EE[R_T] < c'' \sigma \sqrt{T}$.  By Lemma \ref{lem:contr}, we have $\EE\big[ \sum_{t=1}^T |x_t-x_0^*|^2 \big] < 4T\Delta^2$, and substitution into \eqref{eq:MI} gives 
\begin{equation}
    I(V;\xv,\yv) \le \frac{15(c')^2}{2\sigma^2} T \Delta^4.
\end{equation}
Since $\Delta^4 = \frac{\sigma^2}{\Ctil T}$, we deduce that $I(V;\xv,\yv) \le \frac{\log 2}{4}$ (say) when $\Ctil$ is sufficiently large.  As a result,  \eqref{eq:Fano} gives $\EE[R_T] \ge \cunder_2 T\Delta^2 H_2^{-1}\big(\frac{3 \log 2}{4}\big)$ (note that $H_2^{-1}$ is an increasing function).  Since $\Delta^2 = \sigma \sqrt{ \frac{1}{\Ctil T} }$, we deduce that $\EE[R_T] \ge \ctil \cdot \sigma \sqrt{T}$, where $\ctil = \cunder_2 \sqrt{ \frac{1}{\Ctil} } H_2^{-1}\big(\frac{3 \log 2}{4}\big)$.  This establishes the desired result.


\end{document}